\documentclass{article}

\PassOptionsToPackage{numbers, compress}{natbib}

\usepackage[final]{neurips_2023}

\usepackage[utf8]{inputenc} %
\usepackage[T1]{fontenc}    %
\usepackage{hyperref}       %
\usepackage{url}            %
\usepackage{booktabs}       %
\usepackage{amsfonts}       %
\usepackage{nicefrac}       %
\usepackage{microtype}      %
\usepackage[dvipsnames]{xcolor}         %

\usepackage{multirow}
\usepackage{graphicx}
\usepackage{subcaption}
\usepackage{xspace}
\usepackage{bbm}
\usepackage{enumitem}

\usepackage{amsmath}
\usepackage{amssymb}
\usepackage{mathtools}
\usepackage{amsthm}
\usepackage{algorithm}
\usepackage{algpseudocode}

\usepackage[capitalize,noabbrev]{cleveref}

\theoremstyle{plain}
\newtheorem{theorem}{Theorem}[section]

\theoremstyle{definition}
\newtheorem{definition}[theorem]{Definition}

\newtheorem{Example}[theorem]{Example}
\theoremstyle{remark}

\usepackage[textsize=tiny]{todonotes}

\title{Proximity-Informed Calibration for Deep Neural Networks}

\author{%
  Miao Xiong$^{1}$\thanks{Corresponding to Miao Xiong (\href{miao.xiong@u.nus.edu}{miao.xiong@u.nus.edu}).} \quad Ailin Deng$^{1}$ \quad Pang Wei Koh$^{23}$ \quad  Jiaying Wu$^{1}$ \\
  \textbf{Shen Li}$^{1}$ \quad \textbf{Jianqing Xu}\quad \textbf{Bryan Hooi}$^{1}$ \\
  $^1$ National University of Singapore 
  $^2$ University of Washington
  $^3$ Google
}

\begin{document}

\newcommand{\notice}[1]{{\textsf{\textcolor{orange}{{\em [#1]}}}}}
\newcommand{\reminder}[1]{{\textsf{\textcolor{red}{[#1]}}}}
\newcommand{\new}[1]{ {\color{blue} {#1} } }
\renewcommand{\new}[1]{ {#1} }

\newcommand{\method}{\textsc{ProCal}\xspace}

\newcommand{\vectornorm}[1]{\left|\left|#1\right|\right|}

\newcommand{\mkclean}{
    \renewcommand{\reminder}{\hide}
}

\newcommand{\bit}{\begin{compactitem}}
\newcommand{\eit}{\end{compactitem}}
\newcommand{\ben}{\begin{compactenum}}
\newcommand{\een}{\end{compactenum}}

\maketitle

\begin{abstract}

Confidence calibration is central to providing accurate and interpretable uncertainty estimates, especially under safety-critical scenarios. However, we find that existing calibration algorithms often overlook the issue of \emph{proximity bias}, a phenomenon where models tend to be more overconfident in low proximity data (i.e., data lying in the sparse region of the data distribution) compared to high proximity samples, and thus suffer from inconsistent miscalibration across different proximity samples. We examine the problem over $504$ pretrained ImageNet models and observe that: 1) Proximity bias exists across a wide variety of model architectures and sizes; 2) Transformer-based models are relatively more susceptible to proximity bias than CNN-based models; 3) Proximity bias persists even after performing popular calibration algorithms like temperature scaling; 4) Models tend to overfit more heavily on low proximity samples than on high proximity samples. 
Motivated by the empirical findings, we propose \method, a plug-and-play algorithm with a theoretical guarantee to adjust sample confidence based on proximity. To further quantify the effectiveness of calibration algorithms in mitigating proximity bias, we introduce proximity-informed expected calibration error (PIECE) with theoretical analysis. We show that \method is effective in addressing proximity bias and improving calibration on balanced, long-tail, and distribution-shift settings under four metrics over various model architectures. We believe our findings on proximity bias will guide the development of \emph{fairer and better-calibrated} models, contributing to the broader pursuit of trustworthy AI.\footnote{Our code is available at: https://github.com/MiaoXiong2320/ProximityBias-Calibration}

\end{abstract}

\section{Introduction}
\label{sec:intro}

Machine learning systems are increasingly deployed in high-stakes applications such as medical diagnosis~\cite{obermeyer2019dissecting,sendak2020human,elfanagely2021machine,caruana2015intelligible}, where incorrect decisions can have severe human health consequences. 
To ensure safe and reliable deployment, \emph{confidence calibration} approaches~\cite{guo2017calibration,wallach2019beyond,minderer2021revisit} are employed to produce more accurate uncertainty estimates, which allow models to establish trust by communicating their level of uncertainty, and to defer to human decision-making when the models are uncertain.

In this paper,  we present a calibration-related phenomenon termed \emph{proximity bias}, which refers to the tendency of current deep classifiers to exhibit higher levels of overconfidence on samples of low proximity, i.e., samples in sparse areas within the data distribution (see \figureautorefname{~\ref{fig:low-proximity-overconfident-main-content}} for an illustrative example). In this study, we quantify the proximity of a sample (Eq. \ref{eq:proximity}) using the average distance to its $K$ (e.g. $K=10$) nearest neighbor samples in the data distribution, and we observe that proximity bias holds for various choices of $K$. Importantly, the phenomenon persists even after applying existing popular calibration methods, leading to different levels of miscalibration across proximities. 

The proximity bias issue raises safety concerns in real-world applications, particularly for underrepresented populations (i.e. low proximity samples)~\cite{obermeyer2019dissecting,rajkomar2018ensuring}. 
A recent skin cancer analysis highlights this concern by revealing that AI-powered models demonstrate high performance for light-skinned individuals but struggle with dark-skinned individuals due to their underrepresentation~\cite{guo2022bias}. This issue can also manifest in the form of proximity bias: suppose a dark-skinned individual has a high risk of 40\% of having the cancer. However, due to their underrepresentation within the data distribution, the model \emph{overconfidently} assigns them 98\% confidence of not having cancer. As a result, these low proximity individuals may be deprived of timely intervention.

To study the ubiquity of this problem, we examine $504$ ImageNet pretrained models from the \texttt{timm} library~\cite{rw2019timm} and make the following key observations: 1) Proximity bias exists generally across a wide variety of model architectures and sizes; 2) Transformer-based models are relatively more susceptible to proximity bias than CNN-based models; 3) Proximity bias persists even after performing popular calibration algorithms including temperature scaling; 4) Low proximity samples are more prone to model overfitting while high proximity samples are less susceptible to this issue.

Besides, we argue that proximity bias is overlooked by \emph{confidence calibration}. Revisiting its definition, $\mathbb{P}(Y = \hat{Y} \mid \hat{P}=p)=p$ for all $p \in[0,1]$, we find that its primary goal is to match confidence with the accuracy of samples sharing the same confidence level. However, \figureautorefname{~\ref{fig:low-proximity-overconfident-main-content}}a reveals that although the model seems well-calibrated within each confidence group, there still exists miscalibration errors among these groups (e.g. low and high proximity samples) due to proximity bias. 

Motivated by this, we propose a debiased variant of the expected calibration error (ECE) metric, called proximity-informed expected calibration error (PIECE) to further capture the miscalibration error due to proximity bias. 
The effectiveness is supported by our theoretical analysis that PIECE is at least as large as ECE and this equality holds when there is no cancellation effect with respect to proximity bias.

To tackle proximity bias and further improve confidence calibration, we propose a plug-and-play method, \method. Intuitively, \method learns a joint distribution of proximity and confidence to adjust probability estimates. To fully leverage the characteristics of the input information, we develop two separate algorithms tailored for continuous and discrete inputs.
We evaluate the algorithms on large-scale datasets: \textbf{balanced datasets} including  \texttt{ImageNet}~\cite{deng2009imagenet} and \texttt{Yahoo-Topics}~\cite{yahoo}, \textbf{long-tail datasets} \texttt{iNaturalist 2021}~\cite{beery2021iwildcam} and \texttt{ImageNet-LT}~\cite{liu2019large} and \textbf{distribution-shift datasets} \texttt{MultiNLI}~\cite{mnli} and \texttt{ImageNet-C}~\cite{imagenetc}. The results show that our algorithm consistently improves the performance of existing algorithms under four metrics with 90\% significance (p-value < 0.1). 

\begin{figure}[t]
    \vspace{-3mm}
    \centering
    \includegraphics[ width=\textwidth]{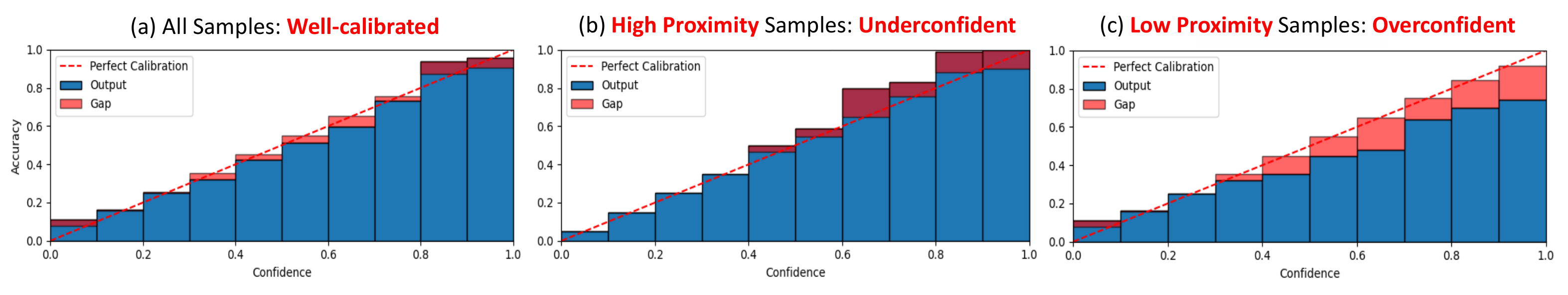}
    
    \caption{
    \textbf{Samples with lower (higher) proximity tend to be more overconfident (underconfident)}. The results are conducted using XCiT, an Image Transformer, on the ImageNet validation set (\textbf{All Samples}). The sample's proximity is measured using the average distance to its nearest neighbors ($K=10$) in the validation set. We split samples into $10$ equal-size bins based on proximity and choose the bin with the highest proximity (\textbf{High Proximity Samples}) and lowest proximity (\textbf{Low Proximity Samples}). 
    }
    \label{fig:low-proximity-overconfident-main-content}
    \vspace{-5mm}
\end{figure}

Our main contributions can be summarized as follows:
\begin{itemize}[noitemsep,nolistsep]
    \item {\bf Findings}: We discover the proximity bias issue and show its prevalence over large-scale analysis ($504$ ImageNet pretrained models).
    \item {\bf Metrics}: To quantify the effectiveness of mitigating proximity bias, we introduce proximity-informed expected calibration error (PIECE) with theoretical analysis.
    \item {\bf Method Effectiveness}: We propose a plug-and-play method \method with theoretical guarantee and verify its effectiveness on various image and text settings. 
\end{itemize}

\section{Related Work}
\label{sec:background}

\vspace{-1mm}
\paragraph{Confidence Calibration} Confidence calibration aims to yield uncertainty estimates via aligning a model's confidence with the accuracy of samples with the same confidence level~\cite{guo2017calibration,lin2022taking,minderer2021revisit}. To achieve this, \textbf{Scaling-based} methods, such as temperature scaling~\cite{guo2017calibration}, adjust the predicted probabilities by learning a temperature scalar for all samples. Similarly, parameterized temperature scaling~\cite{tomani2022parameterized} offers improved expressiveness via input-dependent temperature parameterization, and Mix-n-Match~\cite{zhang2020mix} adopts ensemble and composition strategies to yield data-efficient and accuracy-preserving estimates. \textbf{Binning-based} methods divide samples into multiple bins based on confidence and calibrate each bin. Popular methods include classic histogram binning~\cite{zadrozny2001obtaining}, mutual-information-maximization-based binning~\cite{patel2020multi}, and isotonic regression~\cite{zadrozny2002transforming}. However, existing calibration methods overlook the proximity bias issue, which fundamentally limits the methods' capabilities in delivering reliable and interpretable uncertainty estimates.

\vspace{-1mm}
\paragraph{Multicalibration} Multicalibration algorithms~\cite{hebert2018multi,kleinberg2016inherent} aim to achieve a certain level of fairness by ensuring that a predictor is well-calibrated for the overall population as well as different computationally-identifiable subgroups. \cite{perezlebel2023calibration} proposes a grouping loss to evaluate subgroup calibration error while we propose a metric to integrate the group cancellation effect into existing calibration loss. \cite{kleinberg2016inherent} focuses on understanding the fundamental trade-offs between group calibration and other fairness criteria, and \cite{hebert2018multi} proposes a conceptual iterative algorithm to learn a multi-calibrated predictor. In this regard, our proposed framework can be considered a specific implementation of the fairness objectives outlined in \cite{hebert2018multi}, with a particular focus on proximity-based subgroups. This approach offers easier interpretation and implementation compared to subgroups discussed in~\cite{hebert2018multi,kleinberg2016inherent}.

\section{What is Proximity Bias?}
\label{sec:finding}

In this section, we study the following questions: What is proximity bias? When and why does proximity bias occur?

\vspace{-2mm}
\paragraph{Background} We consider a supervised multi-class classification problem, where input $X \in \mathcal{X}$ and its label $Y \in \mathcal{Y}=\{1, 2, \cdots, C\}$ follow a joint distribution $\pi(X,Y)$. Let $f$ be a classifier with $f(X) = (\hat{Y}, \hat{P})$, where $\hat{Y}$ represents the predicted label, and $\hat{P}$ is the model's confidence, i.e. the estimate of the probability of correctness~\cite{guo2017calibration}. For simplicity, we use $\hat{P}$ to denote both the model's confidence and the confidence calibrated using existing calibration algorithms.  

\vspace{-2mm}

\paragraph{Proximity} We define proximity as a function of the average distance between a sample $X$ and its $K$ nearest neighbors $\mathcal{N}_K(X)$ in the data distribution:
\begin{equation}
\label{eq:proximity}
D(X) = \exp\left(-\frac{1}{K} \sum_{X_i\in \mathcal{N}_K(X)} \operatorname{dist}(X, X_i)\right),
\end{equation}
where $\operatorname{dist}(X, X_i)$ denotes the distance between sample $X$ and its $i$-th nearest neighbor $X_i$, estimated using Euclidean distance between the features of $X$ and $X_i$ from the model's penultimate layer. $K$ is a hyperparameter (we set $K=10$ in this paper). We use the validation set as a proxy to estimate the data distribution. That is, we compute any point’s proximity by finding its nearest neighbors in the held-out validation set.
Although the training set can also be employed to compute proximity, we utilize the validation set because it is readily accessible during the calibration process. 

The exponential function is used to normalize the distance measure from a range of $[0,\inf]$ to $[0,1]$, making the approach more robust to the effects of distance scaling since the absolute distance in Euclidean distance can cause instability and difficulty in modeling. This definition allows us to capture the local density of a sample and its relationship to its neighborhood. For instance, a sample situated in a sparse region of the training distribution would receive a low proximity value, while a sample located in a dense region would receive a high proximity value. Samples with low proximity values represent \textbf{underrepresented samples} in the data distribution that merit attention, such as rare (``long-tail'') diseases, minority populations, and samples with distribution shift.

\begin{figure*}[t]
    \centering
    \includegraphics[width=0.99\linewidth]{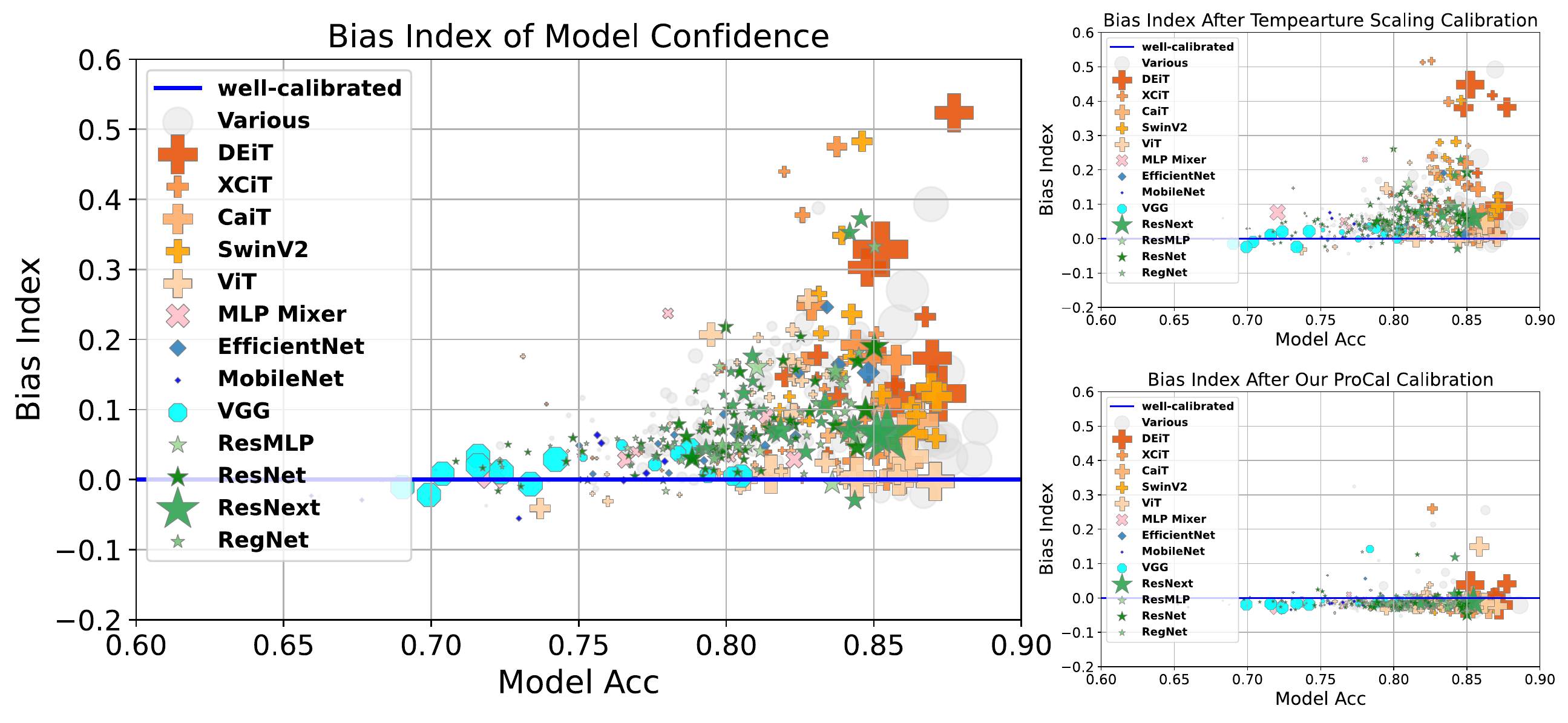}
    \caption{Proximity bias analysis on $504$ public models. Each marker represents a model, where marker sizes indicate model parameter numbers and different colors/shapes represent different architectures. The bias index is computed using \equationautorefname{~\eqref{eq:bias-index}} ($0$ indicates no proximity bias). \textbf{Left}: We observed the following: 1) Models with higher accuracy tend to have a larger bias index. 2) Proximity bias exists across a wide range of model architectures. 3) Transformer variants (e.g. DEiT, XCiT, CaiT, and SwinV2) have a relatively larger bias compared to convolution-based networks (e.g. VGG and ResNet variants). \textbf{Right}: Confidence calibrated by temperature scaling (Upper Right) is similar to the original model confidence w.r.t proximity bias. Our \method (Bottom Right) is effective in reducing proximity bias. Analysis of other existing calibration algorithms can be found in \appendixautorefname{~\ref{appendix:findings}}. }
    \label{fig:proximity-bias-500}
    \vspace{-2mm}
\end{figure*}

\paragraph{Proximity Bias} 
To investigate the relationship between proximity and model miscalibration, we define proximity bias as follows:
\begin{definition}
Given any confidence level $p$, the model suffers from proximity bias if the following condition does not hold: 
\begin{align*}
\mathbb{P}\left(\hat{Y}=Y \mid \hat{P}=p, D=d_1\right) =
\mathbb{P}\left(\hat{Y}=Y \mid \hat{P}=p, D=d_2\right) \quad \forall\ d_1, d_2 \in (0,1], d_1 \neq d_2.
\end{align*}
\end{definition}
The intuition behind this definition is that, ideally, a sample with a confidence level of $p$ should have a probability of being correct equal to $p$, regardless of proximity. However, if low proximity samples consistently display higher confidence than high proximity samples (as shown in Figure~\ref{fig:low-proximity-overconfident-main-content}), it can lead to unreliable and unjust decision-making, particularly for underrepresented populations.

\subsection{Main Empirical Findings}
To showcase the ubiquity of this problem, we examine the proximity bias phenomenon on $504$ ImageNet pretrained models from the \texttt{timm} library~\cite{rw2019timm} and show the results in \figureautorefname{~\ref{fig:proximity-bias-500} (see \appendixautorefname{~\ref{appendix:findings}} for additional figures and analysis). 
We use statistical hypothesis testing to investigate the presence of proximity bias. The null hypothesis $H_0$ is that proximity bias does not exist, formally, for any confidence $p$ and proximities $d_1 > d_2$:  
\begin{align}
\mathbb{P}\left(\hat{Y}=Y \mid \hat{P}=p, D=d_1\right)=\mathbb{P}\left(\hat{Y}=Y \mid \hat{P}=p, D=d_2\right).
\end{align}
To test the above null hypothesis, we first split the samples into 5 equal-sized proximity groups and select the highest and lowest proximity groups. From the high proximity group, we randomly select 10,000 points and find corresponding points in the low proximity group that have similar confidence levels. Next, we reverse this process, randomly selecting 10,000 points from the low proximity group and find corresponding points in the high proximity group with matched confidence. We then merge all the points from the high proximity group into $B_H$ and those from the low proximity group into $B_L$, with the $B_H$ and $B_L$ having similar average confidence. Finally, we apply the Wilcoxon rank-sum test~\cite{lam1983modified} to evaluate  whether there is a significant difference in the sample means (i.e. accuracy) of $B_H$ and $B_L$. More implementation details can be found in \appendixautorefname{~\ref{append-sec:hypothesis-testing}}.

Inspired by the hypothesis testing, we define \textbf{Bias Index} as the accuracy drop between the confidence-matched high proximity group $B_H$ and low proximity group $B_L$ to reflect the degree of bias: 
\begin{align}
\label{eq:bias-index}
\text{Bias Index} = \frac{ \sum_{(X, Y) \in B_H} \mathbbm{1}\{\hat{Y} = Y\}}{|B_H|} - \frac{ \sum_{(X , Y) \in B_L} \mathbbm{1}\{\hat{Y} = Y\}}{|B_L|}=\operatorname{Acc}(B_H)-\operatorname{Acc}(B_L).
\end{align}
Note that $B_H, B_L$ are obtained from the hypothesis testing process and hence have the same mean confidence.

The hypothesis testing results indicate that over 80\% of $504$ models have a p-value less than $0.05$ (72\% after Bonferroni correction \cite{bonferroni1936teoria}), i.e., the null hypothesis is rejected with a confidence level of at least 95\%, indicating that proximity bias plagues most of the models in \texttt{timm}.

We show the bias index of $504$ models in \figureautorefname{~\ref{fig:proximity-bias-500}} and make the following findings:

\textbf{1. Proximity bias exists generally across a wide variety of model architecture and sizes.} 
\figureautorefname{~\ref{fig:proximity-bias-500}} shows that most models (80\% of the models as supported by hypothesis testing) have a bias index larger than 0, indicating the existence of proximity bias.

\textbf{2. Transformer-based methods are relatively more susceptible to proximity bias than CNN-based methods.}
In \figureautorefname{~\ref{fig:proximity-bias-500}}, models with lower accuracy (primarily CNN-based models such as VGG, EfficientNet, MobileNet, and ResNet~\cite{ResNet50} variants) tend to have lower bias index. On the other hand, among models with higher accuracy, Transformer variants (e.g., DEiT, XCiT~\cite{ali2021xcit}, CaiT~\cite{deit}, and SwinV2) demonstrate relatively higher bias compared to convolution-based networks (e.g., ResNet variants). This is concerning given the increasing popularity of Transformer-based models in recent years and highlights the need for further research to study and address this issue.

\textbf{3. Popular calibration methods such as temperature scaling do not noticeably alleviate proximity bias.} \figureautorefname{~\ref{fig:proximity-bias-500}}~(upper right) shows that the proximity bias index remains large even after applying temperature scaling, indicating that this method does not noticeably alleviate the problem. In contrast, \figureautorefname{~\ref{fig:proximity-bias-500}}c demonstrates that our proposed approach successfully shifts the models to a much closer distribution around the line $y=0$ (indicating no proximity bias). 
The bias index figures for more existing calibration methods are provided in \autoref{appendix:findings}.

\textbf{4. Low proximity samples are more prone to model overfitting.} \figureautorefname{~\ref{fig:overfit}} in \appendixautorefname{~\ref{appendix:findings}} shows that the model's accuracy difference between the training and validation set is more significant on low proximity samples (31.67\%) compared to high proximity samples (0.6\%). This indicates that the model generalizes well on samples of high proximity but tends to overfit on samples of low proximity. The overconfidence of low proximity samples can be a consequence of the overfitting tendency, as the overfitting gap also reflects the mismatch between the model's confidence and its actual accuracy.

\section{Proximity-Informed ECE}
\label{sec:metrics}

As depicted in \figureautorefname{~\ref{fig:low-proximity-overconfident-main-content}}, \emph{existing evaluation metrics underestimate the true miscalibration level}, as proximity bias causes certain errors in the model to cancel out. 
As an example, consider a scenario:
\begin{Example}
\label{example}
All samples are only drawn from two proximity groups of equal probability mass, $d=0.2$ and $d=0.8$, with true probabilities of $\mathbb{P}(Y=\hat{Y}|X, f)$ being $0.5$ and $0.9$, respectively. The model outputs the confidence score $p=0.7$ to all samples.
\end{Example}

We consider the most commonly used metric, expected calibration error (ECE) that is defined as $\operatorname{ECE} = \mathbb{E}_{\hat{P}}\left[\left|\mathbb{P}(\hat{Y}=Y \mid \hat{P})-\hat{P}\right|\right]$. In Example~\ref{example}, the ECE is 0, suggesting that the model is perfectly calibrated in terms of confidence calibration. 
In fact, the model has significant miscalibration issues: it is heavily overconfident in one proximity group while heavily underconfident in the other, highlighting the limitations of existing calibration metrics. The miscalibration errors within the same confidence group are \emph{canceled out} by samples with both high and low proximity, resulting in a phenomenon we term \emph{cancellation effect}.

To further evaluate the miscalibration canceled out by proximity bias, we propose the proximity-informed expected calibration error (PIECE). PIECE is defined in an analogous fashion as ECE, yet it further examines information about the proximity of the input sample, $D(X)$, in the calibration evaluation: 
\begin{equation}
\mathrm{PIECE}=\mathbb{E}_{\hat{P},D}\left[\left|\mathbb{P}(\hat{Y}=Y \mid \hat{P}, D) - \hat{P}\right|\right].
\end{equation}

Back to Example~\ref{example} where ECE $=0$, we have PIECE $=0.2$, revealing its miscalibration level in the subpopulations of different proximities, i.e., the calibration error regarding proximity bias. 
Additionally, we demonstrate in \theoremautorefname{~\ref{theo-main-content:convergency}} that PIECE is \emph{always at least as large} as ECE, with the equality holding only when there is no cancellation effect w.r.t proximity. The detailed proof is relegated to \appendixautorefname{~\ref{append:proof-theorem-4-2}}.

\begin{theorem}[PIECE captures cancellation effect.]
    \label{theo-main-content:convergency}
Given any joint distribution $\pi(X,Y)$ and any classifier $f$ that outputs model confidence $\hat{P}$ for sample $X$, we have the following inequality, where equality holds only when there is no cancellation effect with respect to proximity:
$$
 \underbrace{\mathbb{E}_{\hat{P}}\left[\left|\mathbb{P}(\hat{Y}=Y \mid \hat{P})-\hat{P}\right|\right]}_{\mathrm{ECE}} \leq  \underbrace{\mathbb{E}_{\hat{P},D}\left[\left|\mathbb{P}(\hat{Y}=Y \mid \hat{P}, D) - \hat{P}\right|\right]}_{\mathrm{PIECE}}.
$$
\end{theorem}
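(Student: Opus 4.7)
The plan is to combine the tower property of conditional expectation with Jensen's inequality for the convex function $|\cdot|$. The key observation is that $\mathbb{P}(\hat{Y}=Y \mid \hat{P})$ is obtained from $\mathbb{P}(\hat{Y}=Y \mid \hat{P}, D)$ by averaging out $D$ conditional on $\hat{P}$, so the ECE integrand is the absolute value of a conditional mean, while the PIECE integrand (once we condition on $\hat{P}$) is the conditional mean of the absolute value.

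First I would rewrite the probability as an indicator expectation, $\mathbb{P}(\hat{Y}=Y \mid \hat{P}) = \mathbb{E}[\mathbbm{1}\{\hat{Y}=Y\} \mid \hat{P}]$, and then apply the tower property with respect to the finer $\sigma$-algebra generated by $(\hat{P}, D)$ to obtain
\[
\mathbb{P}(\hat{Y}=Y \mid \hat{P}) \;=\; \mathbb{E}\bigl[\mathbb{P}(\hat{Y}=Y \mid \hat{P}, D)\,\bigm|\, \hat{P}\bigr].
\]
Since $\hat{P}$ is measurable with respect to itself, subtracting $\hat{P}$ inside the conditional expectation is free, giving $\mathbb{P}(\hat{Y}=Y \mid \hat{P}) - \hat{P} = \mathbb{E}\bigl[\mathbb{P}(\hat{Y}=Y \mid \hat{P}, D) - \hat{P} \,\bigm|\, \hat{P}\bigr]$. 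Applying the conditional Jensen inequality to $x \mapsto |x|$ then yields
\[
\bigl|\mathbb{P}(\hat{Y}=Y \mid \hat{P}) - \hat{P}\bigr| \;\leq\; \mathbb{E}\bigl[\bigl|\mathbb{P}(\hat{Y}=Y \mid \hat{P}, D) - \hat{P}\bigr| \,\bigm|\, \hat{P}\bigr].
\]
Taking an outer expectation over $\hat{P}$ and using the tower property one more time converts the right-hand side into the unconditional expectation over $(\hat{P}, D)$, which is exactly PIECE, while the left-hand side is ECE.

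For the equality condition, I would invoke the standard fact that Jensen's inequality for the absolute value is tight if and only if the integrand does not change sign (almost surely) on the $\sigma$-algebra being averaged over. Here this means that, conditional on $\hat{P}$, the signed calibration gap $\mathbb{P}(\hat{Y}=Y \mid \hat{P}, D) - \hat{P}$ has a constant sign as $D$ varies; equivalently, within each confidence level, high-proximity and low-proximity subpopulations do not err in opposite directions. This is exactly the informal notion of "no cancellation effect with respect to proximity" used in Example~\ref{example}, so I would state this characterization explicitly as part of the proof.

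The main obstacle is not the inequality itself, which follows almost mechanically from tower plus Jensen, but rather pinning down a clean formal definition of the "no cancellation effect" condition that aligns with the Jensen equality case; in particular, I would want to phrase it as: for $\hat{P}$-almost every $p$, the function $d \mapsto \mathbb{P}(\hat{Y}=Y \mid \hat{P}=p, D=d) - p$ is almost surely of one sign under the conditional law of $D$ given $\hat{P}=p$. A minor measure-theoretic care point is that the conditional probabilities $\mathbb{P}(\hat{Y}=Y \mid \hat{P})$ and $\mathbb{P}(\hat{Y}=Y \mid \hat{P}, D)$ should be understood as regular conditional expectations so that the tower identity applies pointwise almost surely; this is standard and will be mentioned only briefly.
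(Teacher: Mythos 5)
Your proposal is correct and follows essentially the same route as the paper's proof: rewrite the conditional probabilities as conditional expectations of the indicator, insert the finer conditioning on $(\hat{P},D)$ via the law of total expectation, and apply (conditional) Jensen's inequality to $|\cdot|$ before taking the outer expectation over $\hat{P}$. Your phrasing of the equality case---that the signed gap $\mathbb{P}(\hat{Y}=Y\mid\hat{P},D)-\hat{P}$ must have constant sign in $D$ for almost every confidence level---is in fact a cleaner statement of the condition the paper describes as ``no cancellation effect,'' so no gap remains.
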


\section{How to Mitigate Proximity Bias?}
\label{sec:meth}

In this section, we propose \method to achieve three goals: 1) \textbf{mitigate proximity bias}, i.e., ensure samples with the same confidence level have the same miscalibration gap across all proximity levels, 2) \textbf{improve confidence calibration} by reducing overconfidence and underconfidence, and 3) provide a \textbf{plug-and-play} method that can combine the strengths of existing approaches with our proximity-informed approach.

The high-level intuition is to explicitly incorporate proximity when estimating the underlying probability of the model prediction being correct. 
In addition, existing calibration algorithms can be classified into 2 types: 1) those producing \emph{continuous outputs}, exemplified by scaling-based methods~\cite{guo2017calibration}; 2) those producing \emph{discrete outputs}, such as binning-based methods, which group the samples into bins and assign the same scores to samples within the same bin.
To fully leverage the distinct properties of the input information, we develop two separate algorithms tailored for continuous and discrete inputs. This differentiation is based on the observation that continuous outputs (e.g., those produced by scaling-based methods) contain rich distributional information suitable for density estimation. On the other hand, discrete inputs (e.g., those generated by binning-based methods) allow for robust binning-based adjustments. By treating these inputs separately, we can effectively harness the characteristics of each type. 

In summary, Density-Ratio Calibration (\textsection{\ref{method:density-ratio}}) estimates continuous density functions, and aligns well with type 1) methods that produce continuous confidence scores. In contrast, Bin-Mean-Shift (\textsection{\ref{method:bin-mean-shift}}) does not rely on densities, making it more compatible with type 2) calibration methods that yield discrete outputs. Together, these two calibration techniques constitute a versatile plug-and-play framework \method, applicable to confidence scores of both continuous and discrete types.

\subsection{Continuous Confidence: Density-Ratio Calibration}
\label{method:density-ratio}
The common interpretation of confidence is the likelihood of a model prediction $\hat{Y}$ being identical to the ground truth label $Y$ for every sample $X$. Computing this probability directly with density estimation methods can be computationally demanding, particularly in high-dimensional spaces. To circumvent the curse of dimensionality and address proximity bias, we incorporate the model confidence $\hat{P}$ and proximity information $D(X)$ to estimate the posterior probability of correctness, i.e., $\mathbb{P}(\hat{Y} = Y \mid \hat{P}, D)$. This approach is data-efficient since it conducts density estimation in a two-dimensional space only, rather than in the higher dimensional feature or prediction simplex space.

Consider a test sample $X$ with proximity $D = D(X)$ and uncalibrated confidence score $\hat{P}$, which can be the standard Maximum Softmax Probability (MSP), or the output of any calibration method. $\mathbb{P}(\hat{Y} = Y \mid \hat{P}, D)$ can be computed via Bayes' rule:
\begin{align*}
\mathbb{P}\left(\hat{Y}=Y \mid \hat{P}, D\right)=\frac{\mathbb{P}\left(\hat{P}, D \mid \hat{Y}=Y\right) \mathbb{P}\left(\hat{Y}=Y\right)}{\mathbb{P}(\hat{P}, D)},
\end{align*}
where $\hat{Y}$ is the model prediction and $Y$ is the ground truth label.
This can be re-expressed as follows by using the law of total probability:
\begin{align*}
\frac{\left.\mathbb{P} ( \hat{P}, D \mid \hat{Y}=Y\right)}{\mathbb{P}\left(\hat{P}, D \mid \hat{Y}=Y\right)+\mathbb{P}\left(\hat{P}, D \mid \hat{Y} \neq Y\right) \cdot \frac{\mathbb{P}\left(\hat{Y} \neq Y\right)}{\mathbb{P}\left(\hat{Y}=Y\right)}}.
\end{align*}
To compute this calibrated score, we need to estimate the distributions $\mathbb{P}\left(\hat{P}, D \mid \hat{Y}=Y\right)$ and $\mathbb{P}\left(\hat{P}, D \mid \hat{Y} \neq Y\right)$, and the class ratio $\frac{\mathbb{P}\left(\hat{Y} \neq Y\right)}{\mathbb{P}\left(\hat{Y}=Y\right)}$. 

To estimate the probability density functions $\mathbb{P}\left(\hat{P}, D \mid \hat{Y}=Y\right)$ and $\mathbb{P}\left(\hat{P}, D \mid \hat{Y} \neq Y\right)$, various density estimation methods can be used, such as parametric methods like Gaussian mixture models or non-parametric methods like kernel density estimation (KDE)\cite{parzen1962estimation}. We choose KDE because it is flexible and robust, making no assumptions about the underlying distribution (see \appendixautorefname{~\ref{appendix:exp_setup}} for specific implementation details). Specifically, we split samples into two groups based on whether they are correctly classified and then use KDE to estimate the two densities. To obtain the class ratio $\frac{\mathbb{P}\left(\hat{Y} \neq Y\right)}{\mathbb{P}\left(\hat{Y}=Y\right)}$, we simply use the ratio of the number of correctly classified samples and the number of misclassified samples in the validation set.
The \textbf{pseudocode} for inference and training can be found in \appendixautorefname{~\ref{alg:density-ratio} and \ref{alg:inference}.

\subsection{Discrete Confidence: Bin Mean-Shift}
\label{method:bin-mean-shift}
The Bin-Mean-Shift approach aims to first use 2-dimensional binning to estimate the joint distribution of proximity $D$ and input confidence $\hat{P}$ and then estimate $\mathbb{P}\left(\hat{Y}=Y \mid \hat{P}, D\right)$.  
Considering test samples $X$ with proximity $D = D(X)$ and uncalibrated confidence score $\hat{P}$, we first group samples into 2-dimensional equal-size bins based on their $D$ and $\hat{P}$ (other binning schemes can also be used; we choose quantile for simplicity). Next, for each bin $B_{mh}$, we calculate its accuracy $\mathcal{A}(B_{mh})$ and mean confidence $\mathcal{F}(B_{mh})$. 
Then, the confidence scores of samples within the bin are adjusted as: 
\begin{equation}
    \label{eq:bin-mean-shift-main-content}
    \hat{P}_{ours} = \hat{P} + \lambda \cdot \left(\mathcal{A}(B_{mh}) - \mathcal{F}(B_{mh}) \right),
\end{equation}
where the shrinkage coefficient $\lambda \in (0, 1]$ is a hyper-parameter, controlling the bias-variance trade-off. Ideally, setting $\lambda=1$ would ideally achieve our goal. However, in practice, we often encounter bins with a smaller number of samples, whose estimate of $\mathcal{A}(B_{mh}) - \mathcal{F}(B_{mh})$ will have high variance and therefore inaccurate. To reduce variance in these scenarios, we can set a smaller $\lambda$. In practice, we choose $\lambda=0.5$ as a reasonable default for all our experiments, which we find offers consistent performance across various settings.

Note that our approach (i.e. applying a mean-shift in each bin) differs from the typical histogram binning method (replacing the confidence scores with its mean accuracy in each bin). Rather than completely replacing the input confidence scores $\hat{P}$ (which are often reasonably well-calibrated), our approach better utilizes these scores by only adjusting them by the minimal mean-shift needed to correct for proximity bias in each of the 2-dimensional bins.

\subsection{Theoretical Guarantee}
Here we present that our method, Bin-Mean-Shift, can consistently achieve a smaller Brier Score given a sufficient amount of data in the context of binary classification. The Brier Score~\cite{brierscore} is a strictly proper score function that measures both calibration and accuracy aspects~\cite{kuleshov2022calibrated}, with a smaller value indicating better performance. As illustrated below, our algorithm's Brier Score is asymptotically bounded by the original Brier Score, augmented by a non-negative term.
\begin{theorem}[Brier Score after Bin-Mean-Shift is asymptotically bounded by Brier Score before calibration]
\label{thm:loss-decomp-main-content}
Given a joint data distribution $\pi(X,Y)$ and a binary classifier $f$, for any calibration algorithm $h$ that outputs score $h(\hat{P})$ based on model confidence $\hat{P}$, we apply Bin-Mean-Shift to derive calibrated score $\tilde{h}(\hat{P})$ as defined in \equationautorefname{~\eqref{eq:bin-mean-shift-main-content}}. Let 
$h_c(\hat{P})=h(\hat{P})\times \mathbbm{1}\{\hat{Y}=1\} + (1-h(\hat{P}))\times\mathbbm{1}\{\hat{Y}=0\}$  denote the probability assigned to  class 1 by $h(\hat{P})$, and define $\tilde{h}_c(\hat{P})$ similarly. Then, the Brier Score before calibration can be decomposed as follows:

\begin{equation*}
\resizebox{\linewidth}{!}{
    $\underbrace{\mathbb{E}_{\pi(X,Y)}\left[\left( h_c(\hat{P}) - Y\right)^2\right]}_{\textbf{Brier Score before Calibration}}=\underbrace{\mathbb{E}_{\pi(X,Y)}\left[\left( \tilde{h}_c(\hat{P}) - Y \right)^2\right]}_{\textbf{Brier Score after Calibration}}+
    \underbrace{\mathbb{E}_{B\sim \mathbb{P}(B)}\left[\left(\hat{\mathcal{A}}(B)-\hat{\mathcal{F}}(B)\right)^2\right]}_{\geq 0}+o(1),$
    }
\end{equation*}%

where $\mathbb{P}(B)$ is determined by the binning mechanism used in Bin-Mean-Shift.
\end{theorem}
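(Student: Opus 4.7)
The strategy is to decompose the Brier Score with the elementary identity $(a-b)^2=(a-c)^2+2(a-c)(c-b)+(c-b)^2$ applied with $a=h_c(\hat{P})$, $c=\tilde{h}_c(\hat{P})$, $b=Y$. Taking expectations under $\pi(X,Y)$ gives
\begin{equation*}
\mathbb{E}\bigl[(h_c-Y)^2\bigr] = \mathbb{E}\bigl[(\tilde{h}_c-Y)^2\bigr] + \mathbb{E}\bigl[(h_c-\tilde{h}_c)^2\bigr] + 2\,\mathbb{E}\bigl[(h_c-\tilde{h}_c)(\tilde{h}_c-Y)\bigr],
\end{equation*}
reducing the theorem to (i) evaluating the squared-gap term as $\mathbb{E}_B[(\hat{\mathcal{A}}(B)-\hat{\mathcal{F}}(B))^2]+o(1)$ and (ii) showing the cross term is $o(1)$.

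For (i), the Bin-Mean-Shift update in Equation~\eqref{eq:bin-mean-shift-main-content} gives $\tilde{h}(\hat{P})-h(\hat{P})=\lambda(\mathcal{A}(B)-\mathcal{F}(B))$ for every sample in bin $B$, and a one-line case analysis on $\hat{Y}\in\{0,1\}$ (using $h_c=h\mathbbm{1}\{\hat{Y}=1\}+(1-h)\mathbbm{1}\{\hat{Y}=0\}$ and the analogous formula for $\tilde{h}_c$) shows $(h_c-\tilde{h}_c)^2=\lambda^2(\mathcal{A}(B)-\mathcal{F}(B))^2$ regardless of $\hat{Y}$. Conditioning on $B$ and averaging yields $\lambda^2\,\mathbb{E}_B[(\mathcal{A}(B)-\mathcal{F}(B))^2]$, and the law of large numbers applied within each bin (as the calibration-sample size grows) lets me replace the empirical bin statistics $\mathcal{A},\mathcal{F}$ by their population counterparts $\hat{\mathcal{A}},\hat{\mathcal{F}}$ at an $o(1)$ cost.

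For (ii), I condition on $B$ and set $\delta=\lambda(\mathcal{A}(B)-\mathcal{F}(B))$ and $s=\mathbbm{1}\{\hat{Y}=1\}-\mathbbm{1}\{\hat{Y}=0\}$, so that $h_c-\tilde{h}_c=-s\delta$ and $\tilde{h}_c-Y=(h_c-Y)+s\delta$. The conditional cross term becomes $-\delta\,\mathbb{E}[s(h_c-Y)\mid B]-\delta^2$. Using the two $\hat{Y}$-collapsing identities $s\,h_c=h-\mathbbm{1}\{\hat{Y}=0\}$ and $\mathbbm{1}\{\hat{Y}=Y\}=sY+\mathbbm{1}\{\hat{Y}=0\}$ (each a line of arithmetic on binary variables), one gets $\mathbb{E}[s(h_c-Y)\mid B]=\mathcal{F}(B)-\mathcal{A}(B)$. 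Plugging back gives $\lambda(1-\lambda)(\mathcal{A}(B)-\mathcal{F}(B))^2$, which vanishes at the theorem's target $\lambda=1$; empirical-to-population estimation gaps are again absorbed into $o(1)$.

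\emph{Main obstacle.} The cross term in step (ii) is the delicate step, because the sign $s$ depends on $\hat{Y}$ while bins are indexed by $(\hat{P},D)$ only, so samples with both $\hat{Y}=0$ and $\hat{Y}=1$ coexist in a bin and $s$ cannot simply be pulled outside the conditional expectation. The resolution is the two $\hat{Y}$-collapsing identities, which express $s\,h_c$ and $s\,Y$ in $\hat{Y}$-free forms and let $\mathbb{E}[s(h_c-Y)\mid B]$ reduce to the single bin gap $\mathcal{F}(B)-\mathcal{A}(B)$. A secondary technical point is the LLN step, which requires each bin to contain $\omega(1)$ samples asymptotically --- a mild regularity condition implicit in the binning scheme $\mathbb{P}(B)$.
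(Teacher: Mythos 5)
Your proposal is correct and follows essentially the same route as the paper: the same add-and-subtract decomposition around the shifted score, the squared gap term identified as $\mathbb{E}_{B}\bigl[(\hat{\mathcal{A}}(B)-\hat{\mathcal{F}}(B))^{2}\bigr]$, and the cross term killed asymptotically by the law of large numbers within bins --- indeed your sign-variable identity $s\,(h_c(\hat{P})-Y)=h(\hat{P})-\mathbbm{1}\{\hat{Y}=Y\}$ is exactly the paper's preliminary rewriting of the Brier score in terms of the correctness indicator $\tilde{Y}$, just deployed inside the cross term rather than up front. Your explicit tracking of $\lambda$, showing the displayed decomposition corresponds to $\lambda=1$ (with a $\lambda(2-\lambda)$ factor otherwise), is a worthwhile observation that the paper's appendix handles only implicitly by dropping $\lambda$ from the Bin-Mean-Shift update.
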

\paragraph{Remark.} Note that when the calibration algorithm $h$ is an identity mapping, it demonstrates that Bin-Mean-Shift achieves better model calibration performance than the original model confidence, given a sufficient amount of data. The detailed proof is relegated to \appendixautorefname{~\ref{app:brier-score-proof}}.

\section{Experiments}
\label{sec:exp}

In this section, we aim to answer the following questions: 
\begin{itemize}[noitemsep,nolistsep]
    \item{\bf Performance across different datasets and model architectures:} How does \method perform on datasets with balanced distribution, long-tail distribution, and distribution shift, as well as on different model architectures?
    \item{\textbf{Inference efficiency:}} How efficient is our \method? (see \appendixautorefname{~\ref{appendix-sec:inference-efficiency}})
    \item{\textbf{Hyperparameter sensitivity}:} How sensitive is \method to different hyperparameters, e.g. neighbor size $K$? (See \appendixautorefname{~\ref{appendix:hyperparameter-sensitivity}}) 
    \item{\textbf{Ablation study:}} What is the difference between Density-Ratio and Bin-Mean-shift on calibration? How should the choice between these techniques be determined? (See \appendixautorefname{~\ref{appendix:comparison-Density-Ratio-vs-Bin-Mean-Shift}})

\end{itemize}

\vspace{-3mm}
\subsection{Experiment Setup}
\label{sec:evaluation-metrics}
\vspace{-1mm}
\paragraph{Evaluation Metrics.}
Following \cite{guo2017calibration}, we adopt 3 commonly used metrics to evaluate the \emph{confidence calibration}: Expected Calibration Error (\texttt{ECE}), Adaptive Calibration Error (\texttt{ACE}~\cite{nixon2019measuring}), Maximum Calibration Error (\texttt{MCE}) and our proposed \texttt{PIECE} to evaluate the \emph{bias mitigation} performance. More detailed introduction of these metrics can be found in \appendixautorefname{~\ref{appendix:exp_setup_metric}}.

\vspace{-2mm}
\paragraph{Datasets.} We evaluate the effectiveness of our approach across large-scale datasets of three types of data characteristics (balanced, long-tail and distribution-shifted) in image and text domains: (1) Dataset with \textbf{balanced} class distribution (i.e. each class has an equal size of samples) on vision dataset ImageNet~\cite{deng2009imagenet} and two text datasets including Yahoo Answers Topics~\cite{yahoo} and MultiNLI-match~\cite{mnli}; (2) Datasets with \textbf{long-tail} class distribution on two image datasets, including iNaturalist 2021~\cite{beery2021iwildcam} and ImageNet-LT~\cite{liu2019large}; (3) Dataset with \textbf{distribution-shift} on three datasets, including ImageNet-C~\cite{imagenetc}, MultiNLI-Mismatch~\cite{mnli} and ImageNet-Sketch~\citep{imagenet-sketch}. 

\vspace{-2.5mm}
\paragraph{Comparison methods.} We compare our method to existing calibration algorithms: base confidence score (\texttt{Conf})~\cite{hendrycks2016baseline}, \emph{scaling-based methods} such as Temperature Scaling (\texttt{TS})~\cite{guo2017calibration}, Ensemble Temperature Scaling (\texttt{ETS})~\cite{zhang2020mix}, Parameterized Temperature Scaling (\texttt{PTS})~\cite{tomani2022parameterized}, Parameterized Temperature Scaling with K Nearest Neighbors (\texttt{PTSK}), and \emph{binning based methods} such as Histogram Binning (\texttt{HB}), Isotonic Regression (\texttt{IR}) and Multi-Isotonic Regression (\texttt{MIR})~\cite{zhang2020mix}. Throughout the experiment section, we apply Density-Ratio Calibration to Conf, TS, ETS, PTS, and PTSK and apply Bin-Mean-Shift to binning-based methods IR, HB, and MIR. HB and IR are removed from the long-tail setting due to its instability when the class sample size is very small.

More details on baseline algorithms, datasets, pretrained models, hyperparameters, and implementation details can be found in \appendixautorefname{~\ref{appendix:exp_setup}}.

\begin{table}[t]
\vspace{-2mm}
\centering
\caption{Calibration performance of ImageNet pretrained ResNet50 on long-tail dataset iNaturalist 2021. * denotes significant improvement (p-value < 0.1). `Base' refers to existing calibration methods, `Ours' to our method applied to calibration. Note that `Conf+Ours' shows the result of our method applied directly to model confidence. Calibration error is given by $\times 10^{-2}$. }
\label{table:iNaturalist2021-main-content}
\begin{tabular}{clllllllll} 
\toprule
         & \multicolumn{2}{c}{ECE ~$\downarrow$} & \multicolumn{2}{c}{ACE ~$\downarrow$} & \multicolumn{2}{c}{MCE ~$\downarrow$} & \multicolumn{2}{c}{PIECE ~$\downarrow$} \\
         \cmidrule(r){2-3} \cmidrule(r){4-5} \cmidrule(r){6-7} \cmidrule(r){8-9}
        Method & \multicolumn{1}{c}{base} & \multicolumn{1}{c}{+ours} & \multicolumn{1}{c}{base} & \multicolumn{1}{c}{+ours} & \multicolumn{1}{c}
        {base} & \multicolumn{1}{c}{+ours} & \multicolumn{1}{c}{base} & \multicolumn{1}{c}{+ours}  \\ 
        \midrule
        Conf & 4.85 & \textbf{0.78}* & 4.86 & \textbf{0.76}* & 0.55 & \textbf{0.18}* & 4.91 & \textbf{1.51}* \\  
        TS & 2.03  & \textbf{0.70}* & 2.02  & \textbf{0.78}* & 0.30  & \textbf{0.14}* & 2.34  & \textbf{1.43}*  \\ 
        ETS & 1.12  & \textbf{0.66}* & 1.15  & \textbf{0.77}* & 0.18  & \textbf{0.13}* & 1.79  & \textbf{1.38}*  \\   
        PTS & 4.86  & \textbf{0.71}* & 4.90  & \textbf{0.86}* & 2.96  & \textbf{0.12} & 7.04  & \textbf{1.44}*  \\ 
        PTSK & 2.97  &\textbf{0.65}* & 3.01  & \textbf{0.82}* & 0.56  & \textbf{0.11}* & 4.66  & \textbf{1.41}*  \\
        MIR & 1.05  & \textbf{0.98} & 1.09  & \textbf{1.06} & \textbf{0.18}  & 0.19 & \textbf{1.63} & 1.64 \\  
\bottomrule
\end{tabular}
\vspace{-1.5mm}
\end{table}

\vspace{-3mm}
\subsection{Effectiveness}

\begin{figure}[t]
    \vspace{-1mm}
    \centering
    \includegraphics[width=1.0\linewidth]{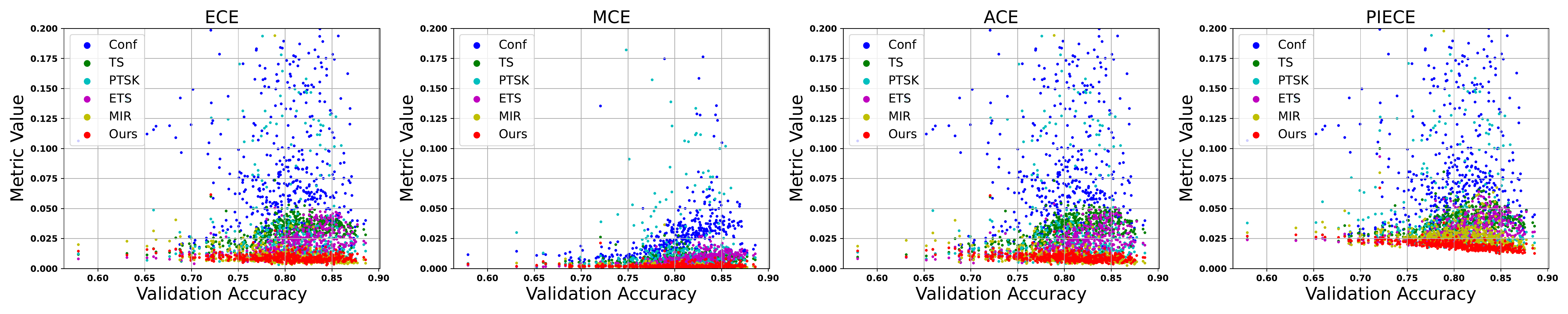}
    \vspace{-4mm}
    \caption{Calibration errors on ImageNet across 504 \texttt{timm} models. Each point represents the calibration result of applying a calibration method to the model confidence. Marker colors indicate different calibration algorithms used. Among all calibration algorithms, our method consistently appears at the bottom of the plot. See \appendixautorefname{~\ref{appendix:additional_experiment_results}} \figureautorefname{~\ref{fig:experiment-imagenet-allmodels-large-resolution}} for high resolution figures. }
    \label{fig:experiment-imagenet-allmodels}
    \vspace{-2mm}
\end{figure}

\begin{table}[tbp]
    \centering
    \caption{We use RoBERTa models~\cite{liu2019roberta} fine-tuned on Yahoo and MultiNLI Match, respectively, as their models. `Base' refers to existing calibration methods and `Ours' refers to our method applied to existing calibration methods. Calibration error is given by $\times 10^{-2}$.}
    \begin{subtable}[t]{0.49\textwidth}
        \centering
        \resizebox{1.0\textwidth}{!}{
            \begin{tabular}{clllllllll} 
                \toprule
                     & \multicolumn{2}{c}{ECE ~$\downarrow$} & \multicolumn{2}{c}{ACE ~$\downarrow$} & \multicolumn{2}{c}{MCE ~$\downarrow$} & \multicolumn{2}{c}{PIECE ~$\downarrow$} \\
                     \cmidrule(r){2-3} \cmidrule(r){4-5} \cmidrule(r){6-7} \cmidrule(r){8-9}
                    Method & \multicolumn{1}{c}{base} & \multicolumn{1}{c}{+ours} & \multicolumn{1}{c}{base} & \multicolumn{1}{c}{+ours} & \multicolumn{1}{c}
                    {base} & \multicolumn{1}{c}{+ours} & \multicolumn{1}{c}{base} & \multicolumn{1}{c}{+ours}  \\ 
                \midrule
                     Conf & 4.56 & \textbf{0.51} & 4.56 & \textbf{0.54} & 0.84 & \textbf{0.09} & 4.73 & \textbf{1.32} \\
                     TS & 0.50 & \textbf{0.42} & 0.46 & 0.46 & 0.09 & \textbf{0.07} & 2.22 & \textbf{1.34} \\
                     ETS & 0.62 & \textbf{0.43} & 0.59 & \textbf{0.45} & 0.09 & \textbf{0.07} & 2.22 & \textbf{1.37} \\
                     PTS & 0.55 & \textbf{0.42} & 0.52 & \textbf{0.42} & 0.10 & \textbf{0.09} & 2.03 & \textbf{1.41} \\
                     PTSK & 0.61 & \textbf{0.47} & 0.51 & 0.51 & 0.13 & \textbf{0.09} & 2.15 & \textbf{1.38} \\
                     HB & 3.28 & \textbf{1.99} & 4.88 & \textbf{2.09} & 1.68 & \textbf{0.67} & 5.33 & \textbf{2.92} \\
                     IR & \textbf{0.64} & 0.84 & \textbf{0.64} & 0.77 & \textbf{0.13} & 0.19 & 2.08 & \textbf{1.75} \\
                     MIR & \textbf{0.63} & 0.72 & \textbf{0.54} & 0.61 & \textbf{0.11} & 0.18 & 2.08 & \textbf{1.65} \\
                \bottomrule
            \end{tabular}
        }
        \caption{Yahoo Answer Topics}
        \label{table:Yahoo_Answers_Topics}
    \end{subtable}
    \begin{subtable}[t]{0.5\textwidth}
        \centering
        \resizebox{1.0\textwidth}{!}{
            \begin{tabular}{clllllllll} 
                \toprule
                     & \multicolumn{2}{c}{ECE ~$\downarrow$} & \multicolumn{2}{c}{ACE ~$\downarrow$} & \multicolumn{2}{c}{MCE ~$\downarrow$} & \multicolumn{2}{c}{PIECE ~$\downarrow$} \\
                     \cmidrule(r){2-3} \cmidrule(r){4-5} \cmidrule(r){6-7} \cmidrule(r){8-9}
                    Method & \multicolumn{1}{c}{base} & \multicolumn{1}{c}{+ours} & \multicolumn{1}{c}{base} & \multicolumn{1}{c}{+ours} & \multicolumn{1}{c}
                    {base} & \multicolumn{1}{c}{+ours} & \multicolumn{1}{c}{base} & \multicolumn{1}{c}{+ours}  \\ 
                \midrule
                Conf & 2.47 & \textbf{1.45} & 2.62 & \textbf{1.46} & 0.85 & \textbf{0.37} & 3.54 & \textbf{2.78} \\
TS & 1.70 & \textbf{1.22} & 1.76 & \textbf{1.35} & 0.41 & \textbf{0.40} & 3.03 & \textbf{2.78} \\
ETS & 1.58 & \textbf{1.24} & 1.56 & \textbf{1.26} & 0.66 & \textbf{0.39} & 3.04 & \textbf{2.77} \\
PTS & 7.53 & \textbf{2.42} & 7.50 & \textbf{2.46} & 4.21 & \textbf{0.93} & 7.78 & \textbf{3.56} \\
PTSK & 10.14 & \textbf{4.26} & 10.14 & \textbf{4.40} & 7.40 & \textbf{2.83} & 10.33 & \textbf{5.15} \\
HB & 1.11 & \textbf{1.05} & \textbf{1.17} & 1.50 & 0.36 & \textbf{0.25} & 3.76 & \textbf{2.51} \\
IR & \textbf{0.88} & 1.43 & \textbf{1.07} & 1.23 & 0.36 & \textbf{0.34} & \textbf{2.46} & 2.61 \\
MIR & \textbf{0.71} & 1.03 & \textbf{1.07} & 1.36 & \textbf{0.26} & 0.40 & 2.58 & \textbf{2.35} \\
                \bottomrule
            \end{tabular}
        }
        \caption{MultiNLI Mismatch}
        \label{table:mnli-mismatch}
    \end{subtable}
\end{table}

\paragraph{Datasets with the balanced class distribution.}
The results on ImageNet of 504 models from \texttt{timm}~\cite{rw2019timm} are depicted in \figureautorefname{~\ref{fig:experiment-imagenet-allmodels}}, where our method (red color markers) consistently appears at the bottom, achieving the lowest calibration error across all four evaluation metrics in general. This indicates that our method consistently outperforms other approaches in eliminating proximity bias and improving confidence calibration. We also select four popular models from these 504 models, specifically \texttt{BeiT}~\cite{bao2022beit}, \texttt{MLP Mixer}~\cite{mlp-mixer}, \texttt{ResNet50}~\cite{ResNet50} and \texttt{ViT}~\cite{vit}. A summary of their results is presented in \tableautorefname{~\ref{tab:imagenet-4-models-appendix}} of \appendixautorefname{~\ref{appendix:additional_experiment_results}}. Additionally, \tableautorefname{~\ref{table:Yahoo_Answers_Topics}} and \tableautorefname{~\ref{table:mnli-match-kde-bin-both}} present the calibration results for the text classification task on Yahoo Answers Topics and the text understanding task on MultiNLI,  where our method consistently improves the calibration of existing methods and model confidence. See\appendixautorefname{~\ref{appendix:additional_experiment_results}} for more details. 

\vspace{-2mm}
\paragraph{Datasets with the long-tail class distribution.}
Table~\ref{table:iNaturalist2021-main-content} shows the results on the long-tail image dataset iNaturalist 2021. Our method (`ours') consistently improves upon existing algorithms (`base') regarding reducing confidence calibration errors (\texttt{ECE}, \texttt{ACE}, and \texttt{MCE}) and mitigating proximity bias (\texttt{PIECE}). Note that even when used independently (`Conf+ours') without combining with existing algorithms, our method achieves the best performance across all metrics. This result suggests that our algorithm can make the model more calibrated in the long-tail setting by effectively mitigating the bias towards low proximity samples (i.e. tail classes), highlighting its practicality in real-world scenarios where data is often imbalanced and long-tailed. ImageNet-LT results in \tableautorefname{~\ref{tab:imagenet-lt-appendix}} of \appendixautorefname{~\ref{appendix:longtail-data}} show similar improvement.

\vspace{-2mm}
\paragraph{Datasets with distribution shift.}
\tableautorefname{~\ref{table:mnli-mismatch}} shows our method's calibration performance when trained on an in-distribution validation set (MultiNLI Match) and applied to a cross-domain test set (MultiNLI Mismatch). The results suggest that our method can improve upon most existing methods on ECE, ACE and MCE, and gain consistent improvement on PIECE, indicating its effectiveness in mitigating proximity bias. Moreover, empirical results on ImageNet-C (\figureautorefname{~\ref{fig:imagenetc-daece-small}}) and ImageNet-Sketch (\tableautorefname{~\ref{tab:imagenet-sketch}}) also demonstrate consistent improvements of our method over baselines.  Besides, compared to Bin-Mean-Shift, Density-Ratio exhibits more stable performance on enhancing the existing baselines. More analysis on their comparison can be found in \appendixautorefname{~\ref{appendix:comparison-Density-Ratio-vs-Bin-Mean-Shift}}.

\section{Conclusions and Discussion}
\label{sec:concl}

In this paper, we focus on the problem of proximity bias in model calibration, a phenomenon wherein deep models tend to be more overconfident on data of low proximity (i.e. lying in the sparse region of data distribution) and thus suffer from miscalibration. We study this phenomenon on $504$ public models across a wide variety of model architectures and sizes on ImageNet and find that the bias persists even after applying the existing calibration methods, which drives us to propose \method for tackling proximity bias. To further evaluate the miscalibration due to proximity bias, we propose a proximity-informed expected calibration error (PIECE) with theoretical analysis. Extensive empirical studies on balanced, long-tail, and distribution-shifted datasets under four metrics support our findings and showcase the effectiveness of our method.

\vspace{-2mm}
\paragraph{Potential Impact, Limitations and Future Work}

We uncover the proximity bias phenomenon and show its prevalence through large-scale analysis, highlighting its negative impact on the safe deployment of deep models, e.g. unfair decisions on minority populations and false diagnoses for underrepresented patients. 
We also provide \method as a starting point to mitigate the proximity bias, which we believe has the potential to inspire more subsequent works, serve as a useful guidance in the literature and ultimately lead to \emph{improved and fairer decision-making in real-world applications}, especially for underrepresented populations and safety-critical scenarios.
However, our study also has several limitations. First, our \method maintains a held-out validation set during inference for computing proximity. While we have shown that the cost can be marginal for large models (see Inference Efficiency in \appendixautorefname{~\ref{appendix-sec:inference-efficiency}}), it may be challenging if applied to small devices where memory is limited. Future research can investigate the underlying mechanisms of proximity bias and explore various options to replace the existing approach of local density estimation. Additionally, we only focus on the closed-set multi-class classification problem; future work can generalize this to multi-label, open-set or generative settings.

\section*{Acknowledgments}
The author extends gratitude to Yao Shu and Zhongxiang Dai for the insightful discussions during the review response period, and to Yifei Li for providing constructive feedback on the manuscript draft. This research is supported by the National Research Foundation Singapore under its AI Singapore Programme (Award Number: [AISG2-TC-2021-002]).

\bibliographystyle{plainnat}
\bibliography{neurips_2023}

\newpage
\appendix

\section{Proof of Theorem 5.1}
\label{app:brier-score-proof}
\paragraph{Notation. } 
Let $\pi(X, Y)$ denote the true underlying distribution of the sample $X$ and its label $Y$. The empirical dataset used for Bin-Mean-Shift is denoted by $D_n = \{(\mathbf{x}_1, y_1), (\mathbf{x}_2, y_2), \ldots, (\mathbf{x}_n, y_n)\}$, where $n$ is the number of samples in the dataset, and each datapoint $(\mathbf{x}_i, y_i)$ is independently and identically sampled from $\pi(X, Y)$.

A classifier $f$ is defined as $f(X) = (\hat{Y}, \hat{P})$, where $\hat{Y}$ represents the predicted label and $\hat{P}$ represents the model's confidence, i.e., the estimate of the probability of correctness~\cite{guo2017calibration}. 

We use $h(\hat{P})$ to denote the output confidence score for class prediction $\hat{Y}$ calibrated by any calibration algorithm $h$. For example, in the simplest case, $h$ can be the identity mapping, in which case $h(\hat{P})$ represents the original model confidence. 
 
To simplify notation, we use $B = B(X)$ to denote the bucket to which $X$ belongs. Given a bucket $B$, we compute the empirical accuracy $\hat{\mathcal{A}}_n(B)$ and  mean confidence score $\hat{\mathcal{F}}_n(B)$ for samples in the bucket, based on the empirical dataset $D_n$. Given the classifier $f$ and calibrator $h$, they are computed as follows: 
\begin{equation}
    \hat{\mathcal{A}}_n(B) = \frac{1}{|B|} \sum_{(\mathbf{x}, y) \in D_n} \mathbbm{1}\{(y = \hat{y}) \land  (B(\mathbf{x}) = B) \} 
\end{equation} 
\begin{equation}
    \hat{\mathcal{F}}_n(B) = \frac{1}{|B|} \sum_{ (\mathbf{x},y) \in D_n \land (\mathbf{x} \in B) } h(\hat{P})
\end{equation}
Additionally, we use $\mathcal{A}(B)$ and $\mathcal{F}(B)$ to denote the expected confidence and actual accuracy of samples from the underlying data distribution $\pi(X, Y)$ and belonging to the bucket $B$:
\begin{equation} 
    \mathcal{A}(B) = \mathbb{E}_{\pi(X,Y)} \left[ \mathbbm{1}\{Y = \hat{Y}\} \mid B(X)=B \right] 
\end{equation}
\begin{equation}
    \mathcal{F}(B) = \mathbb{E}_{X} \left[ h(\hat{P}) \mid B(X)=B \right]
\end{equation}

With these definitions, our proposed Bin-Mean-Shift algorithm can be expressed as:
\begin{equation}
\label{eq:bin-mean-shift-appendix}
\tilde{h}(\hat{P}) = h(\hat{P}) + \hat{\mathcal{A}}_n(B) - \hat{\mathcal{F}}_n(B),
\end{equation} 
where $B$ is the bucket that the input sample $X$ belongs to, and $\tilde{h}(\hat{P})$ is the score calibrated using our Bin-Mean-Shift algorithm.

First we revisit the definition of Brier Score. The Brier Score~\cite{brierscore} is a strictly proper score function that measures both calibration and accuracy aspects~\cite{kuleshov2022calibrated}, with a smaller value indicating better performance.
The Brier Score is defined as mean square loss as follows for binary classification:
\begin{equation}
    \text{Brier Score} = \mathbb{E}_{\pi(X,Y)} \left[ \left(  h_c(\hat{P}) - Y \right)^2 \right],
\end{equation}
where $h_c(\hat{P})$ is the probability assigned to class 1 by the calibration algorithm $h$:
\begin{equation}
    h_c(\hat{P}) = \begin{cases}
        h(\hat{P}) & \text{when } \hat{Y} = 1 \\
        1- h(\hat{P}) & \text{when } \hat{Y} = 0 \\
    \end{cases}
\label{eq:h_c}
\end{equation}
In short, this can be represented as $h_c(\hat{P})=h(\hat{P})\times \mathbbm{1}\{\hat{Y}=1\} + (1-h(\hat{P}))\times\mathbbm{1}\{\hat{Y}=0\}$. 

\begin{theorem}[Brier Score after Bin-Mean-Shift is asymptotically bounded by Brier Score before calibration]
\label{thm:append_loss-decomp-main-content}
Given a joint data distribution $\pi(X,Y)$ and a binary classifier $f$, for any calibration algorithm $h$ that outputs a score $h(\hat{P})$ based on model confidence $\hat{P}$, we apply Bin-Mean-Shift to derive the calibrated score $\tilde{h}(\hat{P})$ as defined in \equationautorefname{~\eqref{eq:bin-mean-shift-appendix}}. Let 
$h_c(\hat{P})=h(\hat{P})\times \mathbbm{1}\{\hat{Y}=1\} + (1-h(\hat{P}))\times\mathbbm{1}\{\hat{Y}=0\}$  denote the probability assigned to  class 1 by $h(\hat{P})$, and define $\tilde{h}_c(\hat{P})$ similarly. Then, the Brier Score before calibration can be decomposed as follows:

\begin{equation*}
    \resizebox{\linewidth}{!}{
    $\underbrace{\mathbb{E}_{\pi(X,Y)}\left[\left( h_c(\hat{P}) - Y\right)^2\right]}_{\textbf{Brier Score before Calibration}}
    =\underbrace{\mathbb{E}_{\pi(X,Y)}\left[\left( \tilde{h}_c(\hat{P}) - Y \right)^2\right]}_{\textbf{Brier Score after Calibration}}+
    \underbrace{\mathbb{E}_{B\sim \mathbb{P}(B)}\left[\left(\hat{\mathcal{A}}_n(B)-\hat{\mathcal{F}}_n(B)\right)^2\right]}_{\geq 0}+o(1),$
    }
\end{equation*}%

where $\mathbb{P}(B)$ is determined by the binning mechanism used in Bin-Mean-Shift.
\end{theorem}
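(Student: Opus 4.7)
The plan is to apply the algebraic identity $(h_c - Y)^2 = (\tilde{h}_c - Y)^2 + (h_c - \tilde{h}_c)^2 + 2(h_c - \tilde{h}_c)(\tilde{h}_c - Y)$, take expectation under $\pi(X,Y)$, identify the middle term with the explicit non-negative quantity in the bound, and show that the cross term is $o(1)$ as $n \to \infty$. Conceptually, Bin-Mean-Shift is a plug-in estimator of the $L^2$ projection of $h_c$ onto bucket-wise-constant corrections: the middle term is the squared size of the correction, while the cross term is an orthogonality residual that vanishes as the plug-in estimates converge to their population counterparts.

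First I would exploit the sign structure hidden in the definition of $h_c$. Writing $s = 2\mathbbm{1}\{\hat{Y}=1\} - 1 \in \{-1,+1\}$ and $C = \mathbbm{1}\{\hat{Y}=Y\}$, a direct case analysis on $\hat{Y} \in \{0,1\}$ gives
\[
h_c(\hat{P}) - \tilde{h}_c(\hat{P}) = -s\bigl(\hat{\mathcal{A}}_n(B) - \hat{\mathcal{F}}_n(B)\bigr), \qquad \tilde{h}_c(\hat{P}) - Y = s\bigl(h(\hat{P}) + \hat{\mathcal{A}}_n(B) - \hat{\mathcal{F}}_n(B) - C\bigr).
\]
Because $s^2 = 1$, the squared-difference term collapses to $(h_c - \tilde{h}_c)^2 = (\hat{\mathcal{A}}_n(B) - \hat{\mathcal{F}}_n(B))^2$, whose expectation over the random test point equals $\mathbb{E}_{B\sim \mathbb{P}(B)}[(\hat{\mathcal{A}}_n(B) - \hat{\mathcal{F}}_n(B))^2]$, which is exactly the non-negative term appearing in the theorem.

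Next I would handle the cross term. Again using $s^2 = 1$, it reduces to $-(\hat{\mathcal{A}}_n(B) - \hat{\mathcal{F}}_n(B))\bigl(h(\hat{P}) + \hat{\mathcal{A}}_n(B) - \hat{\mathcal{F}}_n(B) - C\bigr)$. Since the test point $(X,Y)$ is drawn from $\pi$ independently of the calibration sample $D_n$, conditioning on $(B, D_n)$ and using $\mathbb{E}[h(\hat{P}) \mid B] = \mathcal{F}(B)$ together with $\mathbb{E}[C \mid B] = \mathcal{A}(B)$ yields the conditional cross term $(\hat{\mathcal{A}}_n(B) - \hat{\mathcal{F}}_n(B))(\mathcal{A}(B) - \mathcal{F}(B)) - (\hat{\mathcal{A}}_n(B) - \hat{\mathcal{F}}_n(B))^2$. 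The law of large numbers applied within each bin (of fixed positive mass under $\mathbb{P}(B)$) gives $\hat{\mathcal{A}}_n(B) \to \mathcal{A}(B)$ and $\hat{\mathcal{F}}_n(B) \to \mathcal{F}(B)$ almost surely, so the expression tends to zero; since all quantities lie in $[-1,1]$, dominated convergence lifts this pointwise limit to the unconditional $o(1)$ that is absorbed into the remainder.

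The main obstacle I anticipate is the bookkeeping around two independent randomness sources: the bucket $B = B(X)$ depends on the random test point while $\hat{\mathcal{A}}_n(B)$ and $\hat{\mathcal{F}}_n(B)$ are functionals of the calibration dataset $D_n$, and if the binning partition itself is data-dependent (as with the quantile binning used in practice) the bin boundaries are random too. A clean argument would either condition on the partition or show that the boundaries stabilize quickly enough that the in-bin LLN applies uniformly across bins; the fixed-partition case is straightforward. A minor but worth-noting point is that non-negativity of the middle term is automatic since it is a squared expectation, which is precisely what turns the decomposition into a meaningful improvement statement rather than a tautological identity.
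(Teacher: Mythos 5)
Your proposal is correct and follows essentially the same route as the paper: the sign-variable reduction via $s$ and $C=\mathbbm{1}\{\hat{Y}=Y\}$ is equivalent to the paper's rewriting of the Brier score as $\mathbb{E}[(h(\hat{P})-\tilde{Y})^2]$ with $\tilde{Y}=\mathbbm{1}\{Y=\hat{Y}\}$, after which both arguments use the same three-term expansion, identify the middle term as $\mathbb{E}_{B}[(\hat{\mathcal{A}}_n(B)-\hat{\mathcal{F}}_n(B))^2]$, and kill the cross term by conditioning on the bin and invoking the law of large numbers. Your explicit remarks on the two sources of randomness and on data-dependent (quantile) bin boundaries are a slightly more careful treatment of a point the paper passes over silently, but they do not change the structure of the argument.
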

\begin{proof}

First, we prove the following equality which re-expresses the Brier score in an equivalent form:
\begin{equation}
\mathbb{E}_{\pi(X,Y)} \left[ \left(  h_c(\hat{P}) - Y \right)^2 \right] = \mathbb{E}_{\pi(X,Y)} \left[ \left(  h(\hat{P}) - \tilde{Y} \right)^2 \right],
\label{eq:brier}
\end{equation}
where $\tilde{Y} = \mathbbm{1}\{Y=\hat{Y}\}$ is a binary variable representing whether the model's prediction is correct:
\begin{equation}
    \tilde{Y} = \mathbbm{1}\{Y = \hat{Y} \} = \begin{cases}
                                    \mathbbm{1}\{Y = 0 \} = 1 - Y, & \text{when } \quad \hat{Y} = 0 \\
                                    \mathbbm{1}\{Y = 1 \} = Y, & \text{when } \quad \hat{Y} = 1 \\
\end{cases}
\end{equation}

Then we consider $h_c(\hat{P}) - Y$. By Eq. \eqref{eq:h_c}:
\begin{align}
    (h_c(\hat{P}) - Y)^2 &= \begin{cases}
                                    (1 - h(\hat{P}) - Y)^2, & \text{when } \quad \hat{Y} = 0 \\
                                    (h(\hat{P}) - Y)^2 , & \text{when } \quad \hat{Y} = 1 \\
\end{cases} \\
&= \begin{cases}
                                    (\tilde{Y} - h(\hat{P}))^2, & \text{when } \quad \hat{Y} = 0 \\
                                    (h(\hat{P}) - Y)^2 , & \text{when } \quad \hat{Y} = 1 \\
\end{cases}
\end{align}  
So we have $(h_c(\hat{P}) - Y)^2 = (h(\hat{P}) - \tilde{Y})^2$ which completes the proof of the equality in Eq. \eqref{eq:brier}. 

Using this, we rewrite the Brier score as follows:
\begin{equation}
    \text{Brier Score} = \mathbb{E}_{\pi(X,Y)} \left[ \left(  h(\hat{P}) - \tilde{Y}\right)^2 \right],
\end{equation}

Second, we decompose the Brier score:

\begin{align}
\mathbb{E}_{\pi(X,Y)}\left[\left( h(\hat{P}) - \tilde{Y}\right)^2\right]
& = \mathbb{E}_{\pi(X,Y)}\left[\left( h(\hat{P}) - \tilde{h}(\hat{P}) + \tilde{h}(\hat{P}) - \tilde{Y}\right)^2\right] \\
& = \underbrace{\mathbb{E}_{\pi(X,Y)}\left[\left( \tilde{h}(\hat{P}) - \tilde{Y} \right)^2\right]}_{(a)} \\
& + \underbrace{\mathbb{E}_{\pi(X,Y)}\left[\left( h(\hat{P}) - \tilde{h}(\hat{P}) \right)^2\right]}_{(b)} \\
& + \underbrace{2\mathbb{E}_{\pi(X,Y)}\left[\left(h(\hat{P}) - \tilde{h}(\hat{P})\right)\left(\tilde{h}(\hat{P}) - \tilde{Y} \right)\right]}_{(c)}
\end{align}

First note that term $(a)$ is the Brier score after calibration (recalling our earlier equivalent form for the Brier score). For term $(b)$ we recall that $\tilde{h}(\hat{P}) = h(\hat{P}) + \hat{\mathcal{A}}_n(B) - \hat{\mathcal{F}}_n(B)$, which is our proposed Bin-Mean-Shift algorithm in Equation \eqref{eq:bin-mean-shift-appendix}. Note that $X$ can be sampled by first sampling the bin $B$ and then sampling the point $X$ from the corresponding bin. So term $(b)$ can be expressed as: 
$$
\begin{aligned}
    \mathbb{E}_{\pi(X,Y)}\left[\left( h(\hat{P}) - \tilde{h}(\hat{P}) \right)^2\right]
    & = \mathbb{E}_{\pi(X,Y)}\left[\left( h(\hat{P}) - h(\hat{P}) - \hat{\mathcal{A}}_n(B) + \hat{\mathcal{F}}_n(B)) \right)^2\right] \\
    & = \mathbb{E}_{\pi(X,Y)}\left[\left( \hat{\mathcal{A}}_n(B) - \hat{\mathcal{F}}_n(B)) \right)^2\right] \\
    & = \mathbb{E}_{B\sim \mathbb{P}(B)}\mathbb{E}_{X \sim \mathbb{P}(X|B)}\left[\left(\hat{\mathcal{A}}_n(B(X)) - \hat{\mathcal{F}}_n(B(X))\right)^2\right] \\
    & = \mathbb{E}_{B\sim \mathbb{P}(B)}\left[\left(\hat{\mathcal{A}}_n(B) - \hat{\mathcal{F}}(B\right)^2\right]
\end{aligned}
$$
where $\hat{\mathcal{A}}_n(B(X))$ and $\hat{\mathcal{F}}_n(B(X))$ remain the same for all samples following into the same bucket $B$. 

For term $(c)$ we show that: 

\begin{align}
& \mathbb{E}_{\pi(X,Y)}\left[\left(h(\hat{P}) - \tilde{h}(\hat{P})\right)\left(\tilde{h}(\hat{P}) - \tilde{Y} \right)\right] \\
& = \mathbb{E}_{\pi(X,Y)}\left[\left(\hat{\mathcal{F}}(B(X)) - \hat{\mathcal{A}}(B(X))\right)\left(\tilde{h}(\hat{P}) - \tilde{Y} \right)\right] \\
& =  \mathbb{E}_{B\sim \mathbb{P}(B)}\mathbb{E}_{(X,Y) \sim \mathbb{P}(X,Y|B)}\left[\left(\hat{\mathcal{F}}(B(X)) - \hat{\mathcal{A}}(B(X))\right)\left(\tilde{h}(\hat{P}) - \tilde{Y} \right)\right] \\
 & =  \mathbb{E}_{B\sim \mathbb{P}(B)}\left[\left(\hat{\mathcal{A}}_n(B) - \hat{\mathcal{F}}_n(B)\right)\underbrace{\mathbb{E}_{(X,Y) \sim \mathbb{P}(X,Y|B)}\left[\tilde{Y} - \tilde{h}(\hat{P}) \right]}_{(d)}\right],
\end{align}

where in the last step $(\hat{\mathcal{A}}_n(B) - \hat{\mathcal{F}}_n(B))$ is a function of $B$ and thus can be moved out of the inner expectation. For term $(d)$ we have: 
\begin{align}
& \mathbb{E}_{(X,Y) \sim \mathbb{P}(X,Y|B)}\left[\tilde{Y} - \tilde{h}(\hat{P}) \right] \\
& =  \mathbb{E}_{(X,Y) \sim \mathbb{P}(X,Y|B)}\left[\tilde{Y} - h(\hat{P}) - \hat{\mathcal{A}}_n(B) + \hat{\mathcal{F}}_n(B) \right] \\
& = \mathbb{E}_{(X,Y) \sim \mathbb{P}(X,Y|B)}\left[\left(\tilde{Y} - \hat{\mathcal{A}}_n(B) \right) +\left( \hat{\mathcal{F}}_n(B) - h(\hat{P}) \right) \right]  \\
& = \underbrace{\left( \mathbb{E}_{(X,Y) \sim \mathbb{P}(X,Y|B)}\left[ \tilde{Y}  \right] - \hat{\mathcal{A}}_n(B) \right)}_{(e)} + \left( \hat{\mathcal{F}}_n(B) - \mathcal{F}(B) \right)
\end{align}

For term $(e)$ we have: 
\begin{align}
\mathbb{E}_{(X,Y) \sim \mathbb{P}(X,Y|B)}\left[ \tilde{Y}  \right] =  \mathbb{E}_{(X,Y) \sim \mathbb{P}(X,Y|B)}\left[ \mathbbm{1}\{Y = \hat{Y} \} \right] = \mathcal{A}(B).  
\end{align}

By the Law of Large Numbers, the sample means converges to their expectations as follows: 
\begin{equation}
    \lim _{n \rightarrow \infty} \hat{\mathcal{F}}_n(B) = \mathcal{F}(B) \quad \textrm{and} \quad
    \lim _{n \rightarrow \infty} \hat{\mathcal{A}}_n(B) = \mathcal{A}(B).
\end{equation}

This further leads to the following statement for term $(c)$:
\begin{align}
\lim _{n \rightarrow \infty} \mathbb{E}_{\pi(X,Y)}\left[\left(h(\hat{P}) - \tilde{h}(\hat{P})\right)\left(\tilde{h}(\hat{P}) - \tilde{Y} \right)\right] = 0.
\end{align}

Then finally we have the statement:
\begin{align*}
\lim _{n \rightarrow \infty} \underbrace{\mathbb{E}_{\pi(X,Y)}\left[\left(h(\hat{P}) - Y\right)^2\right]}_{\textbf{Brier Score before Calibration}}
    - \underbrace{\mathbb{E}_{\pi(X,Y)}\left[\left( \tilde{h}_c(\hat{P}) - Y \right)^2\right]}_{\textbf{Brier Score after Calibration}} -
    \underbrace{\mathbb{E}_{B\sim \mathbb{P}(B)}\left[\left(\hat{\mathcal{A}}_n(B)-\hat{\mathcal{F}}_n(B)\right)^2\right]}_{\geq 0} = 0
\end{align*}

\end{proof}
\paragraph{Discussion} This theorem shows that Bin-Mean-Shift (BMS) preserves the calibration properties of the input scores $\hat{P}$; i.e. if $h(\hat{P})$ was already well-calibrated, the BMS-calibrated scores $\tilde{h}(\hat{P})$ will continue to be well-calibrated (up to an $o(1)$ term). At the same time, BMS helps to correct for miscalibrations with respect to any choice of buckets (represented by the $(\hat{\mathcal{A}}_n(B)-\hat{\mathcal{F}}_n(B))^2$ term).

Interestingly, this theorem implies that we can have theoretical guarantees for a \emph{pipeline} of different calibration methods: for example, consider a pipeline consisting of any calibration method $h(\hat{P})$, followed by one or more applications of BMS with different choices of binning schemes. Then this theorem shows that the Brier score will decrease with each application of BMS (setting aside the $o(1)$ term). Thus, in contrast to the theoretical guarantees of existing calibration methods (such as histogram binning), which only apply to a single calibration method, our approach points to the theoretical benefits of such pipelines of calibration methods.

\section{Proof of PIECE Guarantee}
\label{append:proof-theorem-4-2}
Recall that $f$ is a classifier (e.g. neural network) with $f(X) = (\hat{Y}, \hat{P})$, where $\hat{Y}$ represents the predicted label, and $\hat{P}$ is the model's confidence, i.e. the estimate of the probability of correctness~\cite{guo2017calibration}.

\begin{theorem}[PIECE captures cancellation effect.]
    \label{theo:convergency}
Given any joint distribution $\pi(X,Y)$ and any classifier $f$ that outputs model confidence $\hat{P}$ for sample $X$, we have the following inequality, where equality holds only when there is no cancellation effect with respect to proximity:
$$
 \underbrace{\mathbb{E}_{\hat{P}}\left[\left|\mathbb{P}(\hat{Y}=Y \mid \hat{P})-\hat{P}\right|\right]}_{\mathrm{ECE}} \leq  \underbrace{\mathbb{E}_{\hat{P},D}\left[\left|\mathbb{P}(\hat{Y}=Y \mid \hat{P}, D) - \hat{P}\right|\right]}_{\mathrm{PIECE}}.
$$
\end{theorem}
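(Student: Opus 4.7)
The plan is to prove PIECE $\geq$ ECE by combining the tower property of conditional expectations with Jensen's inequality applied to the convex function $|\cdot|$. The core observation is that conditioning on both $\hat{P}$ and $D$ refines the information available, and the miscalibration gap averaged over this finer $\sigma$-algebra cannot be smaller (in absolute value) than the gap averaged over the coarser one, since absolute values only shrink when positive and negative residuals cancel.

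Concretely, I would proceed in three steps. First, I would invoke the law of total probability to write
\begin{equation*}
\mathbb{P}(\hat{Y}=Y \mid \hat{P}) - \hat{P} = \mathbb{E}_{D \mid \hat{P}}\!\left[\mathbb{P}(\hat{Y}=Y \mid \hat{P}, D) - \hat{P}\right],
\end{equation*}
using that $\hat{P}$ is $\hat{P}$-measurable, so it passes inside the conditional expectation. Second, I would apply Jensen's inequality to the convex map $u \mapsto |u|$ (equivalently, the conditional triangle inequality) to obtain
\begin{equation*}
\left|\mathbb{P}(\hat{Y}=Y \mid \hat{P}) - \hat{P}\right| \;\leq\; \mathbb{E}_{D \mid \hat{P}}\!\left[\left|\mathbb{P}(\hat{Y}=Y \mid \hat{P}, D) - \hat{P}\right|\right].
\end{equation*}
Third, I would take the outer expectation over $\hat{P}$ on both sides and apply the tower property $\mathbb{E}_{\hat{P}}\mathbb{E}_{D\mid\hat{P}}[\,\cdot\,] = \mathbb{E}_{\hat{P},D}[\,\cdot\,]$ to conclude that ECE $\leq$ PIECE.

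For the equality condition, I would note that Jensen's inequality for $|\cdot|$ becomes tight precisely when the integrand $\mathbb{P}(\hat{Y}=Y \mid \hat{P}, D) - \hat{P}$ is of constant sign (almost surely) in $D$ given $\hat{P}$. This is exactly the formal statement of ``no cancellation effect with respect to proximity'': the per-proximity miscalibration signals at a fixed confidence level all point the same direction (all overconfident or all underconfident), so averaging them does not wash them out. I would close by verifying this interpretation against Example~4.1, where the two proximity groups contribute residuals $+0.2$ and $-0.2$ at $\hat{P}=0.7$, producing ECE $= 0$ but PIECE $= 0.2$, confirming that the gap is entirely due to cancellation. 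The only mild subtlety to be careful about is the measure-theoretic handling of the conditional expectation when $\hat{P}$ or $D$ has a continuous distribution, but this is standard and the tower property applies without modification; no part of the argument requires a discrete decomposition into bins.
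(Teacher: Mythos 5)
Your proposal is correct and follows essentially the same route as the paper's proof: rewrite the inner ECE term via the law of total expectation conditioning additionally on $D$, apply Jensen's inequality (conditional triangle inequality) to $|\cdot|$, and take the outer expectation over $\hat{P}$. Your equality condition—constant sign of $\mathbb{P}(\hat{Y}=Y \mid \hat{P}, D) - \hat{P}$ in $D$ given $\hat{P}$—is in fact a cleaner statement of the paper's ``no cancellation'' criterion.
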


\begin{proof}

Note that ECE~\cite{guo2017calibration} is defined as:
\begin{equation}
\label{equ:ece}
\operatorname{ECE} = \mathbb{E}_{\hat{P}}\left[\left|\mathbb{P}(\hat{Y}=Y \mid \hat{P})-\hat{P}\right|\right]
\end{equation}

We first consider the formula within the expectation:
\begin{equation}
\begin{aligned}
\left|\mathbb{P}(\hat{Y}=Y \mid \hat{P})-\hat{P}\right|&=\left|\mathbb{E}_{X,Y}\left[ \mathbb{I}\{\hat{Y}=Y\} \mid \hat{P}\right]-\hat{P}\right| \\
& =\left|\mathbb{E}_{X,Y}\left[ \mathbb{I}\{\hat{Y}=Y\}- \hat{P} \mid \hat{P}\right]\right| \\
& \stackrel{(a)}{=} \left|\mathbb{E}_{D}\left[\mathbb{E}_{X,Y}\left[ \mathbb{I}\{\hat{Y}=Y\}- \hat{P} \mid \hat{P}, D\right] \right]\right| \\
& \stackrel{(b)}{\leq} \mathbb{E}_D\left[\left|\mathbb{E}_{X,Y}\left[ \mathbb{I}\{\hat{Y}=Y\}- \hat{P} \mid \hat{P}, D\right]\right| \right] \\
& = \mathbb{E}_D\left[\left|\mathbb{E}_{X,Y}\left[ \mathbb{I}\{\hat{Y}=Y\} \mid \hat{P}, D\right]-\hat{P}\right| \right] \\
& = \mathbb{E}_D\left[\left|\mathbb{P}(\hat{Y}=Y \mid \hat{P}, D)-\hat{P}\right| \right] 
\end{aligned}
\end{equation}
where $(a)$ is derived using the Law of Total Expectation and $(b)$ is derived using Jensen's Inequality. 

The equality holds if and only if $\left|\mathbb{P}(\hat{Y}=Y \mid \hat{P}, D)-\hat{P}\right|$ is a linear function in terms of $D$. This condition is satisfied only when there is no cancellation effect with respect to proximity $D$, i.e. either $\mathbb{P}(\hat{Y}=Y \mid \hat{P}, D) \geq \hat{P}$ or $\mathbb{P}(\hat{Y}=Y \mid \hat{P}, D) \leq \hat{P}$ hold for all choices of $D$.   

Then applying this formula to \equationautorefname{~\ref{equ:ece}}, we have:
\begin{equation}
\begin{aligned}
\operatorname{ECE} & =\mathbb{E}_{\hat{P}}\left[\left|\mathbb{P}(\hat{Y}=Y \mid \hat{P})-\hat{P}\right|\right] \\
& \leq \mathbb{E}_{\hat{P}}\left[\mathbb{E}_D\left[\left|\mathbb{P}(\hat{Y}=Y \mid \hat{P}, D)-\hat{P}\right| \right]\right] \\
& = \mathbb{E}_{\hat{P},D}\left[\left|\mathbb{P}(\hat{Y}=Y \mid \hat{P}, D)-\hat{P}\right|\right] \\
& =\operatorname{PIECE}
\end{aligned}
\end{equation}

\section{Experimental Setup}
\label{appendix:exp_setup}

\paragraph{Evaluation Metrics.}
\label{appendix:exp_setup_metric}
Following \cite{guo2017calibration, nixon2019measuring}, we adopt three commonly used metrics to evaluate the \emph{confidence calibration}: Expected Calibration Error (\texttt{ECE}), Adaptive Calibration Error (\texttt{ACE}~\cite{nixon2019measuring}), Maximum Calibration Error (\texttt{MCE}) and our proposed \texttt{PIECE} to evaluate the \emph{bias mitigation} performance. To compute these metrics, we first divide samples into $M=15$ bins and compute every bin’s average confidence and accuracy. Then we compute the absolute difference between each bin's average confidence and its corresponding accuracy. The final calibration error is measured using the weighted difference (the fraction of samples in each bin as the weight). The key distinction between ECE and ACE lies in the binning scheme: ECE divides bins with \emph{equal-confidence} intervals while ACE uses an adaptive scheme that spaces the bin intervals to contain an \emph{equal number} of samples in each bin. In addition, PIECE splits the bin both based on confidence and proximity. It splits samples into $M=15$ equal-number bins based confidence and then split every confidence group into $H=10$ equal-number bins based on proximity. MCE chooses the bin with the largest difference between average confidence and accuracy and output their absolute difference as the final error.

\paragraph{Datasets.} We evaluate the effectiveness of our approach across large-scale datasets of three types of data characteristics (balanced, long-tail and distribution-shifted) in image and text domains: (1) Dataset with \textbf{balanced} class distribution (i.e. each class has an equal size of samples) on vision dataset ImageNet~\cite{deng2009imagenet} and two text datasets including Yahoo Answers Topics~\cite{yahoo} and MultiNLI-match~\cite{mnli}; (2) Datasets with \textbf{long-tail} class distribution on two image datasets, including iNaturalist 2021~\cite{beery2021iwildcam} and ImageNet-LT~\cite{liu2019large}; (3) Dataset with \textbf{distribution-shift} on three datasets, including ImageNet-C~\cite{imagenetc}, MultiNLI-Mismatch~\cite{mnli} and ImageNet-Sketch~\cite{imagenet-sketch}.

\paragraph{Comparison methods.} 
We compare our method to existing calibration algorithms: base confidence score (\texttt{Conf})~\cite{hendrycks2016baseline}, \emph{scaling-based methods} such as Temperature Scaling (\texttt{TS})~\cite{guo2017calibration}, Ensemble Temperature Scaling (\texttt{ETS})~\cite{zhang2020mix}, Parameterized Temperature Scaling (\texttt{PTS})~\cite{tomani2022parameterized}, Parameterized Temperature Scaling with K Nearest Neighbors (\texttt{PTSK}), and \emph{binning based methods} such as Histogram Binning (\texttt{HB}), Isotonic Regression (\texttt{IR}) and Multi-Isotonic Regression (\texttt{MIR})~\cite{zhang2020mix}. Throughout the experiment section, we apply Density-Ratio Calibration to Conf, TS, ETS, PTS, and PTSK and apply Bin-Mean-Shift to binning-based methods IR, HB, and MIR. HB and IR are removed from the long-tail setting due to its instability when the class sample size is very small.

\paragraph{Pretrained Models.} For the proximity bias analysis in \sectionautorefname{~\ref{sec:finding}} and the balanced ImageNet evaluation, we use 504 pretrained models from timm~\cite{rw2019timm} (the list of models are shown in the code repository).  For ImageNet-LT evaluation, we use the model ResNext50 pretrained by \citet{liu2019large} using classifier re-training technique. For iNaturalist 2021, we directly use the ImageNet pretrained backbone ResNet~\cite{ResNet50} which we follow this paper~\citep{van2021benchmarking} and download from the repo\footnote{https://github.com/visipedia/newt/blob/main/benchmark/README.mdv}.. For Yahoo Answers Topics and MultiNLI datasets, we use pretrained RoBERTa from HuggingFace API and fine-tuned on the corresponding datasets with 3 epochs.

\paragraph{Hyperparamerters.} Regarding nearest neighbor computation, we use indexFlatL2 from faiss~\cite{faiss}. Except the Hyperparamerter sensitivity experiments, we use $K=10$ for the proximity computation. Regarding our method, for Density-Ratio, the kernel density estimation for two variables are implemented using statsmodel library~\cite{statsmodels}. For the Bin Mean-Shift method, we set the regularization parameter $\lambda=0.5$. For the calibration setup, we adopt a standard calibration setup \cite{guo2017calibration} with a fixed-size calibration set (i.e. validation set) and evaluation test datasets. Specifically, we randomly split the hold-out dataset into calibration set and evaluation set $n_c = n_e = 25000$ for ImageNet, $n_c = n_e = 50000$ for iNaturalist 2021 and $n_c = n_e = 5000$ for ImageNet-LT. For Yahoo and MultiNLI-Match dataset, we sample 20\% data from the training dataset as calibration set and use the original test dataset as the test dataset. For the evaluation, we use random seed 2020, 2021, 2022, 2023, 2024 and compute the mean (this does not apply to NLP dataset for its fixed test set). 

\paragraph{Details on Statistical Hypothesis Testing.} 
\label{append-sec:hypothesis-testing}
The statistical hypothesis testing requires that for two samples with the same confidence but different proximity, their probability of being correct is the same. To achieve this, we aim for every pair of high and low proximity samples to have the same confidence levels. However, there is indeed an observable difference in the average confidence of $B_H$ and $B_L$, with high proximity samples $B_H$ having higher average confidence. This leads to certain samples that do not have a counterpart with equivalent confidence in the other proximity group. To address this issue, we employ a nearest neighbor search combined with a rejection policy. Specifically, pairs with a confidence difference greater than 0.05 are discarded. This ensures that we are only pairing samples from different proximity groups that have the closest possible confidence levels to one another. When implementing this algorithm, users can also visualize the confidence distribution to verify whether the confidence of two proximity groups have evident overlapping. If their confidence levels have no overlap, we suggest reducing the number of splits from 5 to 3 to ensure low/high proximity groups have similar confidence but different proximity.

\paragraph{Details on KDE} For its simplicity and effectiveness, we use the \textit{KDEMultivariate} function from the \textit{statsmodel} library for density estimation. This function employs a Gaussian Kernel and applies the normal reference rule of thumb (i.e. bw=$1.06\hat{\sigma}n^{-\frac{1}{5}}$) based on the the standard deviation $\hat{\sigma}$ and sample size $n$ to select an appropriate bandwidth. While it is possible to use other density estimation kernels such as \textit{Exponential Kernel} in \textit{Scikit Learn}, we found that the Gaussian kernel coupled with the normal reference rule for bandwidth selection generally yields better performance across various models and datasets.

\end{proof}

\section{Additional Empirical Findings}
\label{appendix:findings}

\subsection{Low proximity samples are more prone to model overfitting}
To study the behavior of low proximity samples and high proximity samples, we compute their accuracy difference between training dataset and validation set. \figureautorefname{~\ref{fig:overfit}} reveals a clear tendency: the model's accuracy difference between the training and validation set is more significant on low proximity samples (31.67\%) compared to high proximity samples (0.6\%). This indicates that the model generalizes well on samples of high proximity but tends to overfit on samples of low proximity. The overconfidence of low proximity samples can be a consequence of the overfitting tendency, as the overfitting gap also reflects the mismatch between the model's confidence and its actual accuracy. 

\begin{figure*}[t]
    \centering    
    \includegraphics[width=0.8\linewidth]{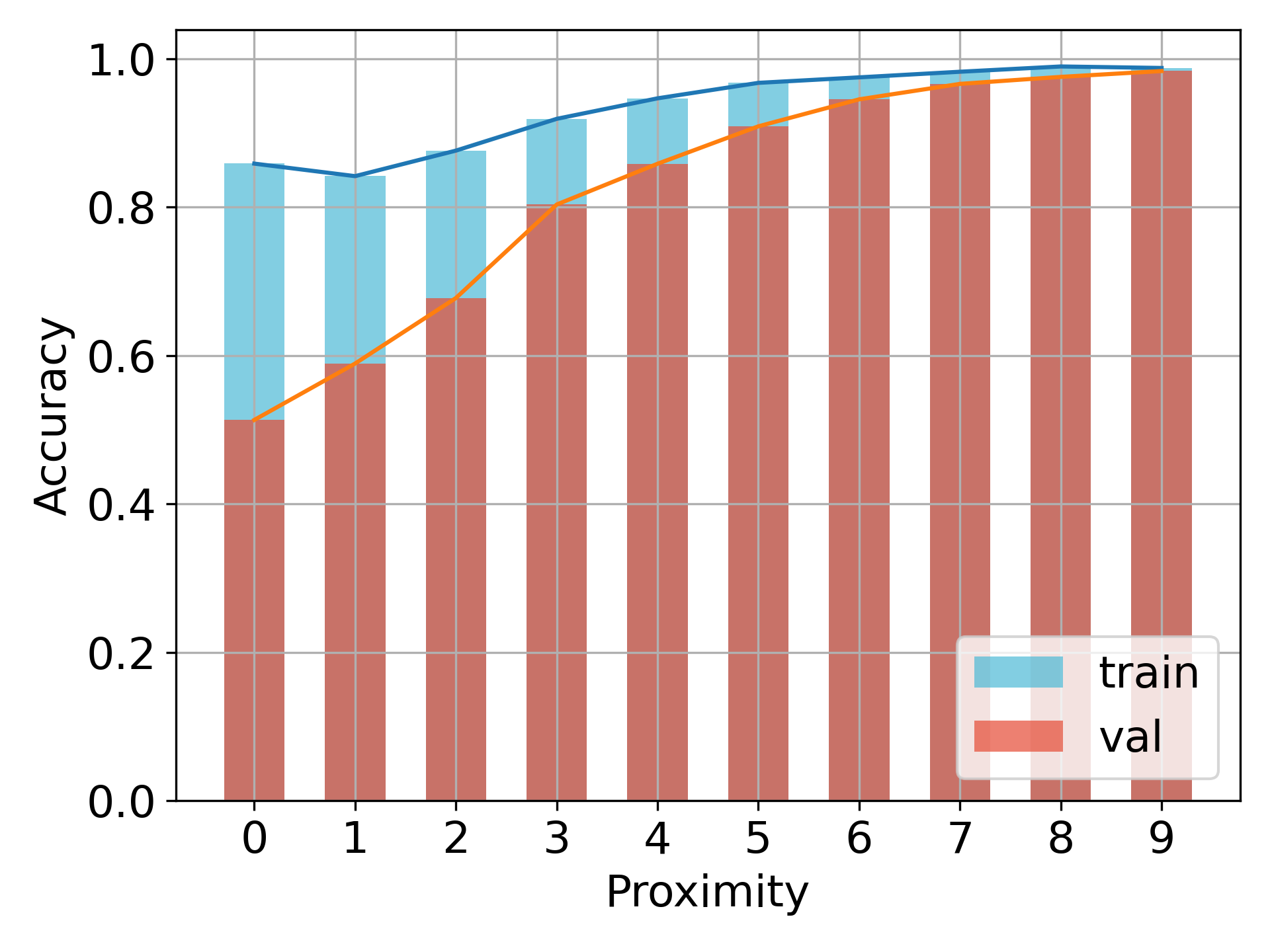}
    \caption{The model's accuracy difference between the training and validation set is more significant on low proximity samples (31.67\%) compared to high proximity samples (0.6\%). The discrepancy in accuracy between the training and validation sets increases as the samples approach to low proximity regions, despite the training dataset and validation set have overlapping proximity distributions.}
    \label{fig:overfit} 
    \vskip -0.1in
\end{figure*}

\subsection{Proximity Bias Across A Variety of Models}
\label{appendix:proximity-bias-findings-3plots}
Here we present the bias index of several popular calibration algorithms on 504 models on ImageNet. As depicted in Figure \ref{app:fig:bias_index_conf}, most models have a bias index larger than 0, indicating the existence of proximity bias across a variety of models. Notably, even after applying temperature scaling, as demonstrated in Figure \ref{app:fig:bias_index_TS}, the calibrated model confidences still exhibit proximity bias. This tendency is further corroborated by Figure \ref{app:fig:bias_index_multi_isotonic_regression} (after multi isotonic regression calibration) and Figure \ref{app:fig:bias_index_ensemble_ts} (after ensemble temperature scaling). In stark contrast, Figure \ref{app:fig:bias_index_ours} reveals the effectiveness of our proposed approach in mitigating proximity bias.

\begin{figure}[t]
    \centering    
     \includegraphics[width=0.8\linewidth]{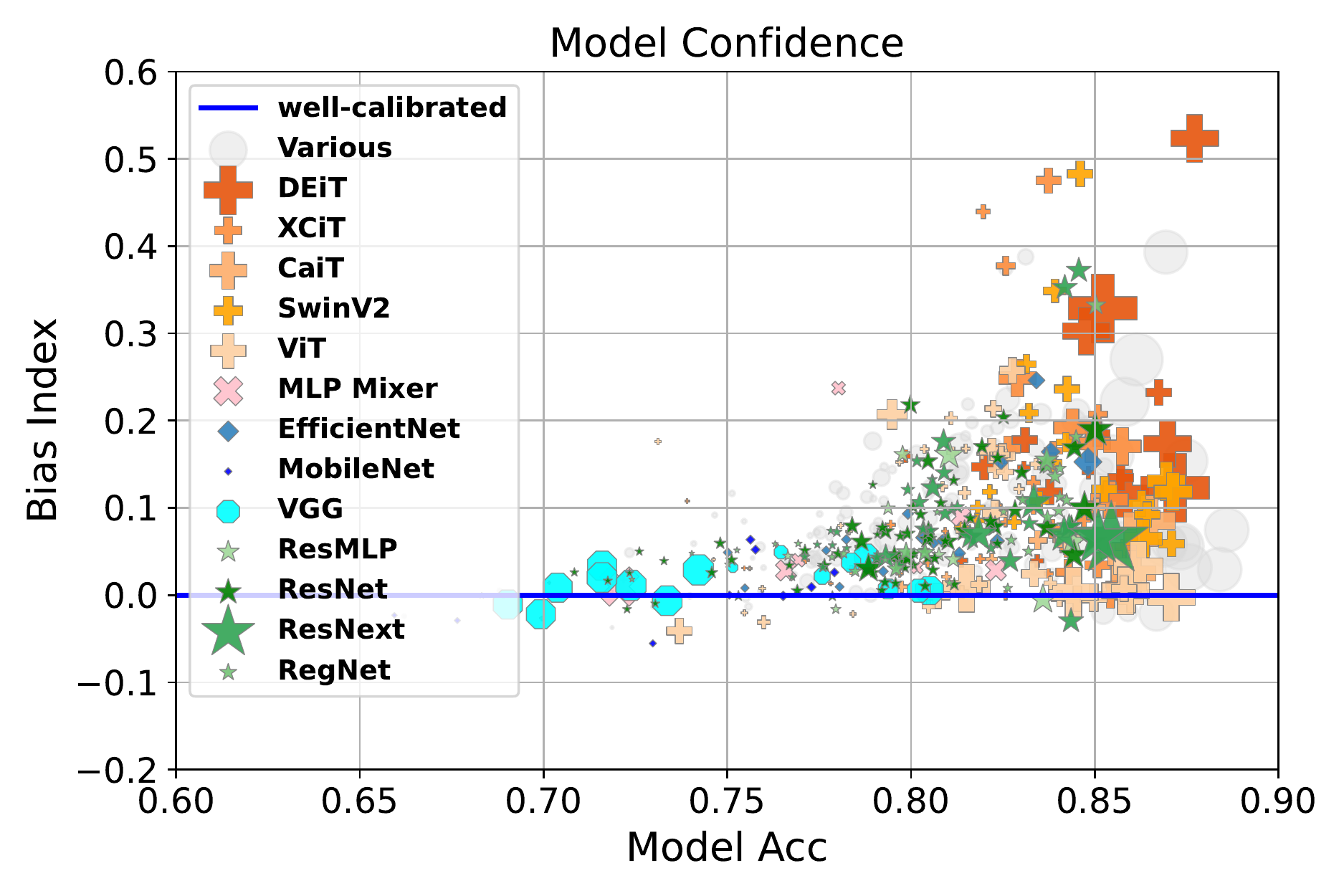}
    \caption{Proximity bias analysis of the model confidence on $504$ public models. Each marker represents a model, where marker sizes indicate model parameter numbers and different colors/shapes represent different architectures. The bias index is computed using \equationautorefname{~\eqref{eq:bias-index}} ($0$ indicates no proximity bias). }
    \label{app:fig:bias_index_conf} 
\end{figure}

\begin{figure}[t]
    \centering    
     \includegraphics[width=0.8\linewidth]{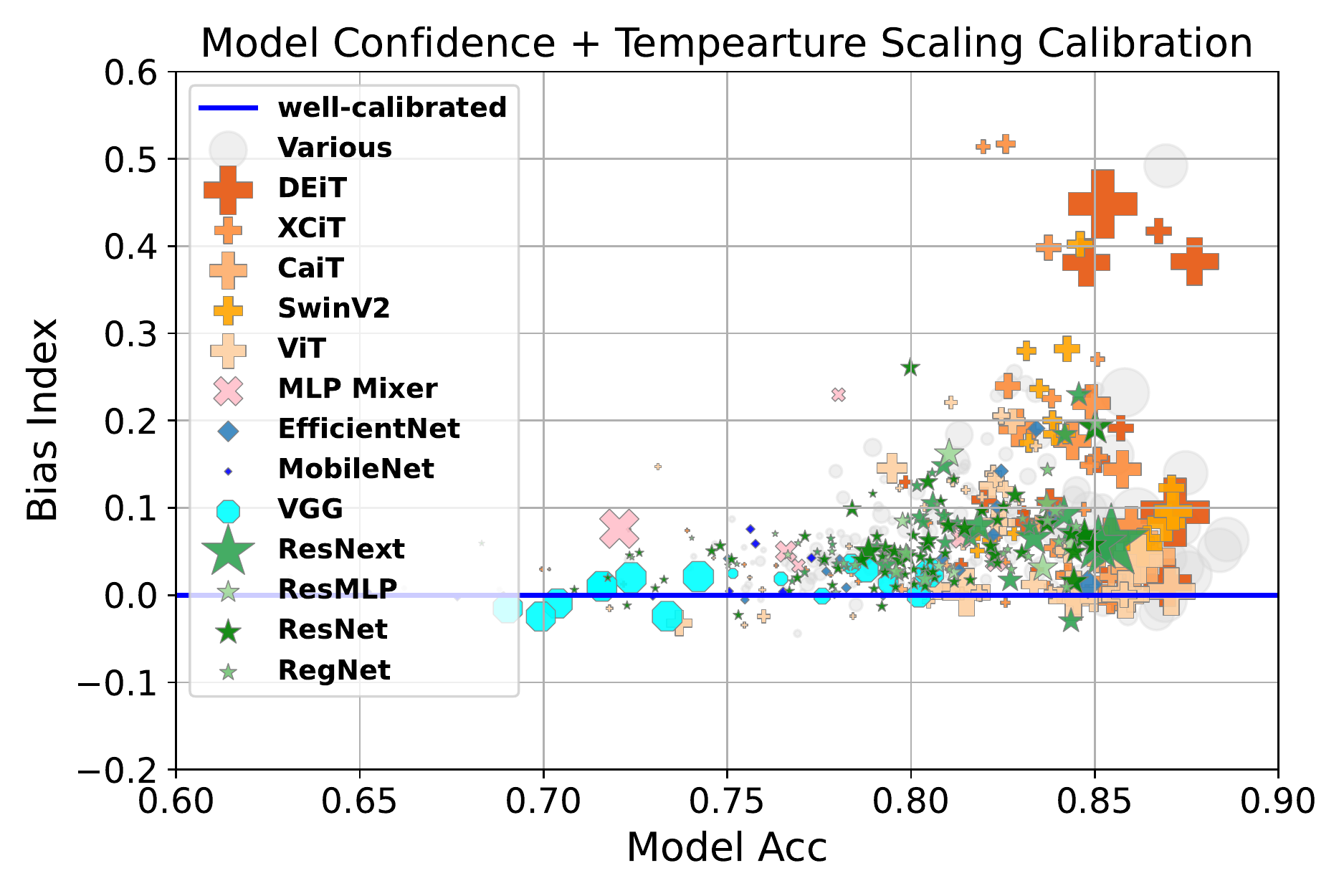}
    \caption{Proximity bias analysis of the model confidence calibrated using \textbf{temperature scaling} on $504$ public models. Each marker represents a model, where marker sizes indicate model parameter numbers and different colors/shapes represent different architectures. The bias index is computed using \equationautorefname{~\eqref{eq:bias-index}} ($0$ indicates no proximity bias). }
    \label{app:fig:bias_index_TS} 
\end{figure}

\begin{figure}[t]
    \centering    
     \includegraphics[width=0.8\linewidth]{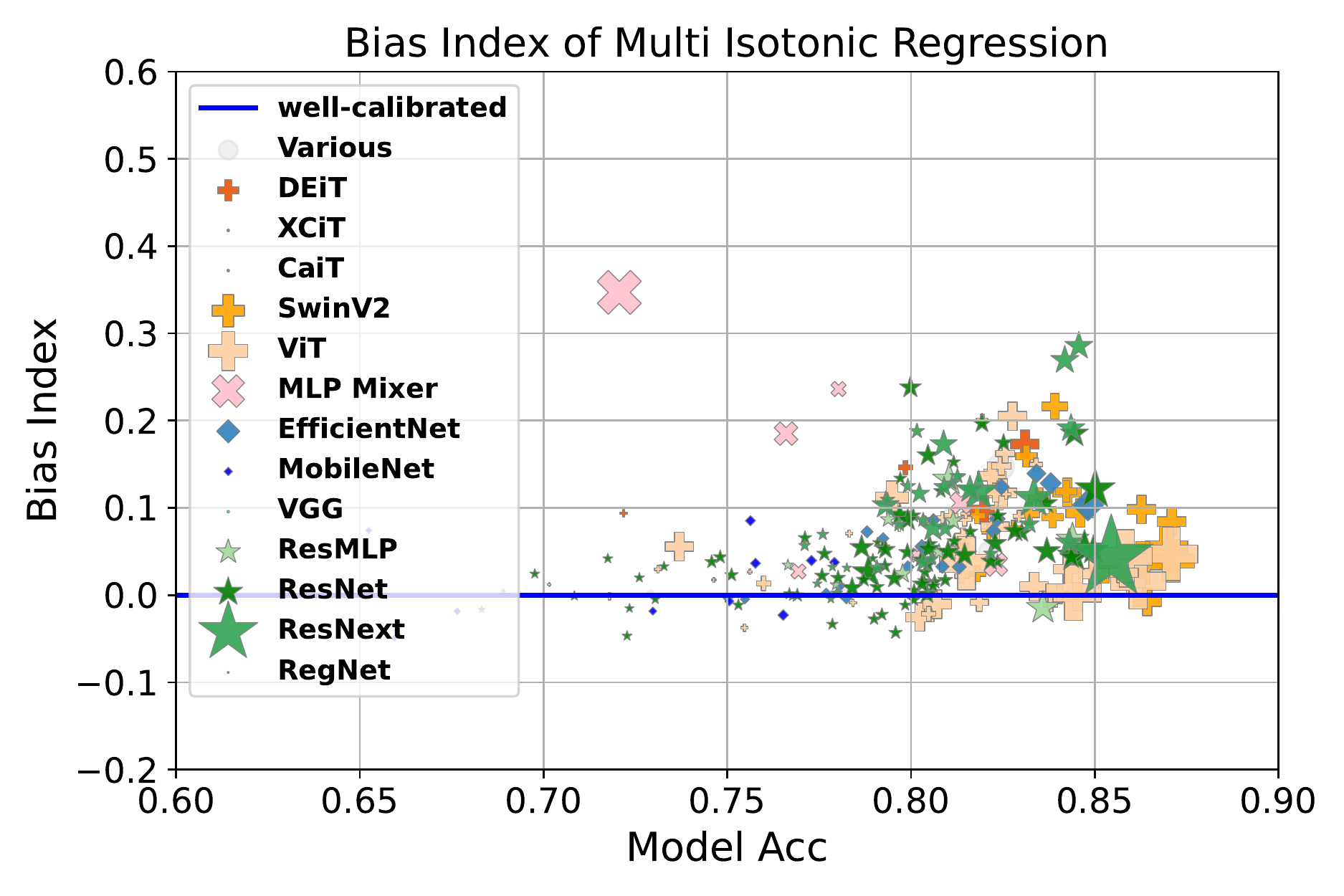}
    \caption{Proximity bias analysis of the model confidence calibrated using \textbf{multi isotontic regression} on $504$ public models. Each marker represents a model, where marker sizes indicate model parameter numbers and different colors/shapes represent different architectures. The bias index is computed using \equationautorefname{~\eqref{eq:bias-index}} ($0$ indicates no proximity bias).}
    \label{app:fig:bias_index_multi_isotonic_regression} 
\end{figure}

\begin{figure}[t]
    \centering    
     \includegraphics[width=0.8\linewidth]{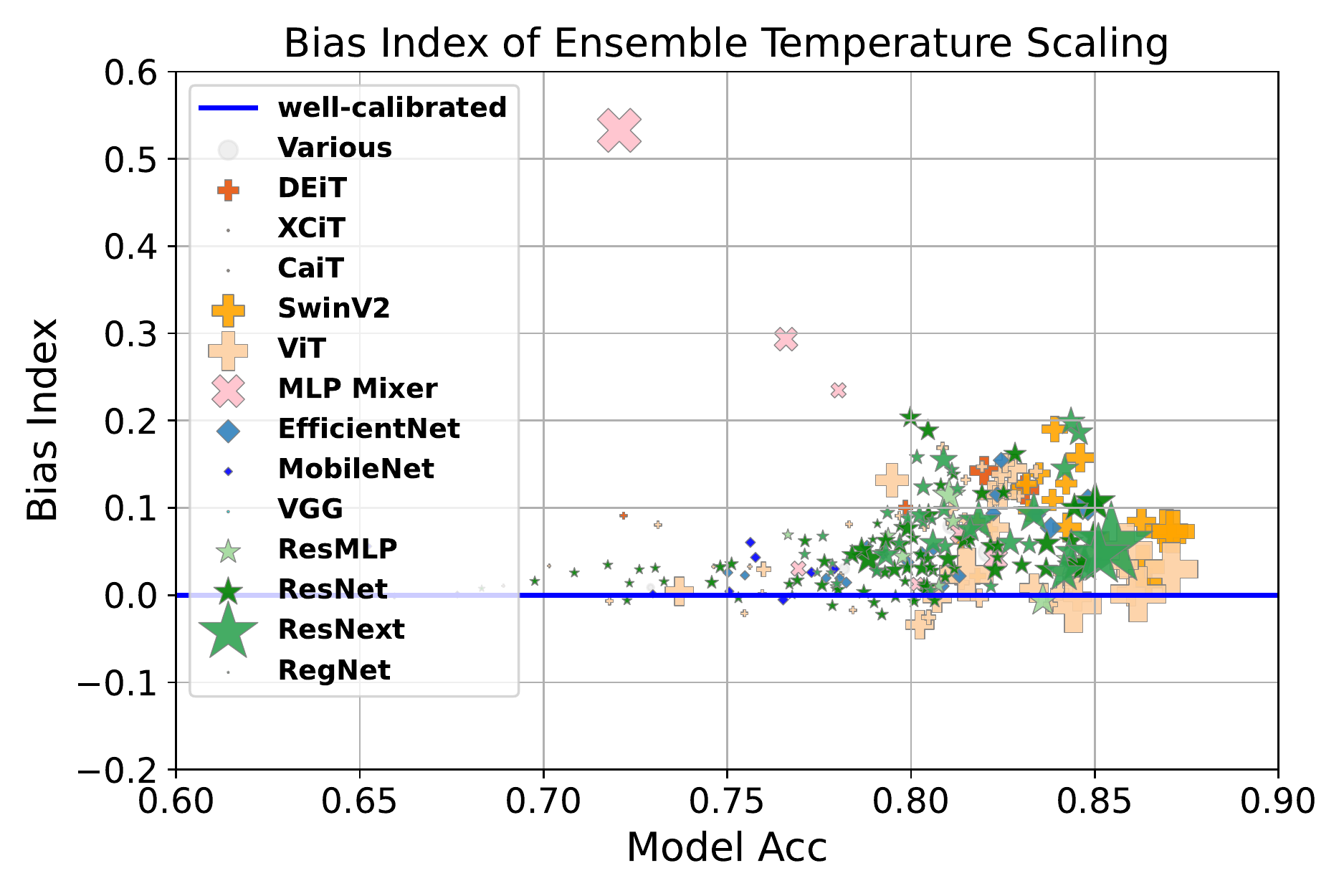}
    \caption{Proximity bias analysis of the model confidence calibrated using \textbf{ensemble temperature scaling} on $504$ public models. Each marker represents a model, where marker sizes indicate model parameter numbers and different colors/shapes represent different architectures. The bias index is computed using \equationautorefname{~\eqref{eq:bias-index}} ($0$ indicates no proximity bias). }
    \label{app:fig:bias_index_ensemble_ts} 
\end{figure}

\begin{figure}[t]
    \centering    
     \includegraphics[width=0.8\linewidth]{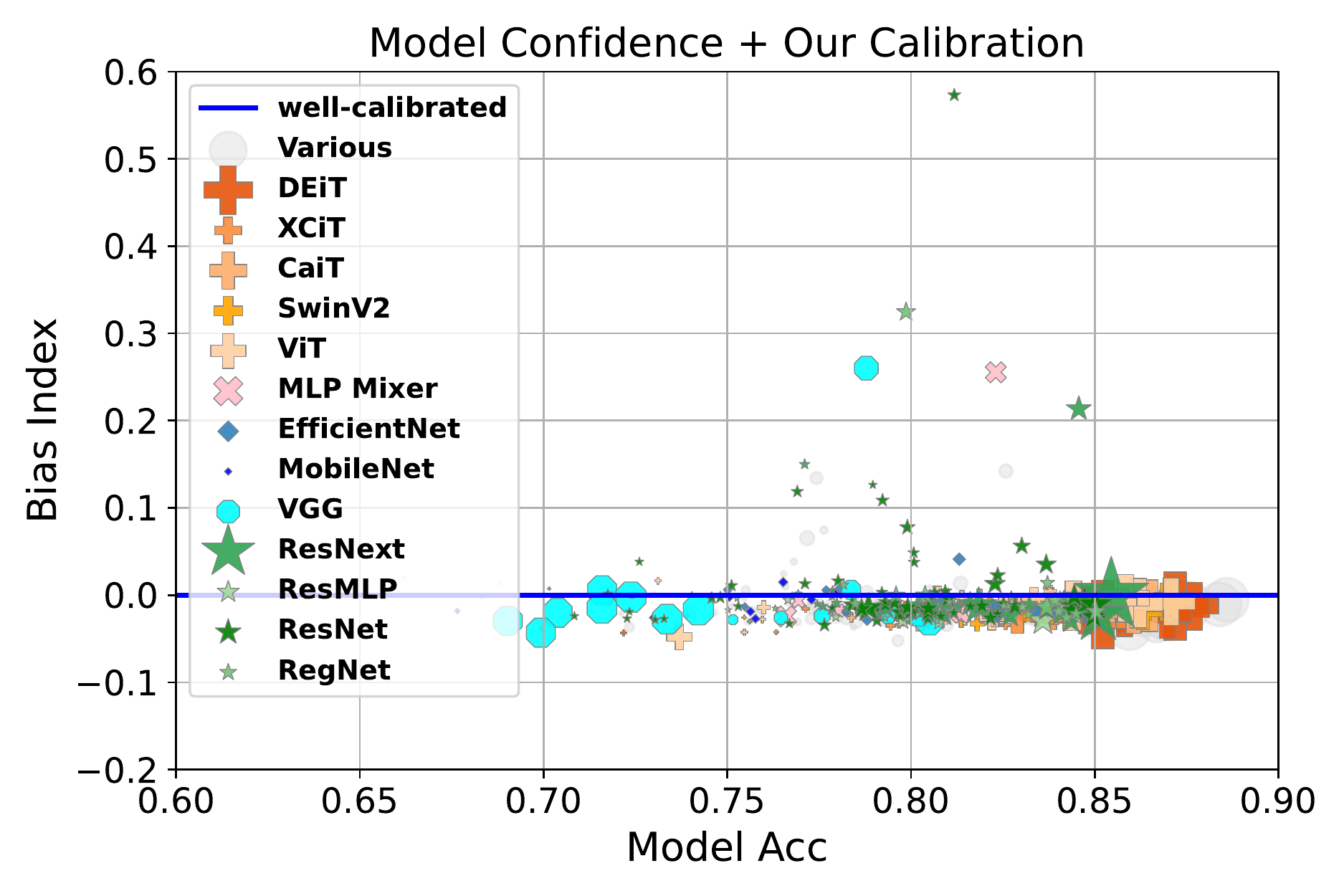}
    \caption{Proximity bias analysis of the model confidence calibrated using our proposed \method on $504$ public models. Each marker represents a model, where marker sizes indicate model parameter numbers and different colors/shapes represent different architectures. The bias index is computed using \equationautorefname{~\eqref{eq:bias-index}} ($0$ indicates no proximity bias). }
    \label{app:fig:bias_index_ours} 
\end{figure}

\subsection{Confidence and Accuracy Are Positively Correlated With Proximity}
Our initial investigation delves into the relationship between sample proximity, confidence, and accuracy across a variety of deep neural network models. We observe a clear trend wherein the model's confidence and accuracy exhibit an upward trajectory from low proximity samples to high proximity samples. This trend is illustrated in Figure~\ref{app:fig:ood-effect}.

This tendency suggests that the model is less confident with samples from low proximity regions, where training samples are sparse. From the perspective of distribution, as samples move towards sparse regions, they are stepping out of the main mass of the training distribution and are considered as out-of-distribution (OoD) samples. This aligns with the general expectation that out-of-distribution data points should have high uncertainties and thus, low confidence estimates.

However, Figure~\ref{app:fig:ood-effect} also shows that the slopes of the accuracy and confidence change are not the same, resulting in a larger miscalibration gap between low proximity and high proximity samples. Furthermore, commonly used calibration techniques such as temperature scaling and histogram binning seems not alleviate this issue. This motivated us to study how proximity information relates to calibration.

\begin{figure*}[h]
    \centering    
     \includegraphics[width=1\linewidth]{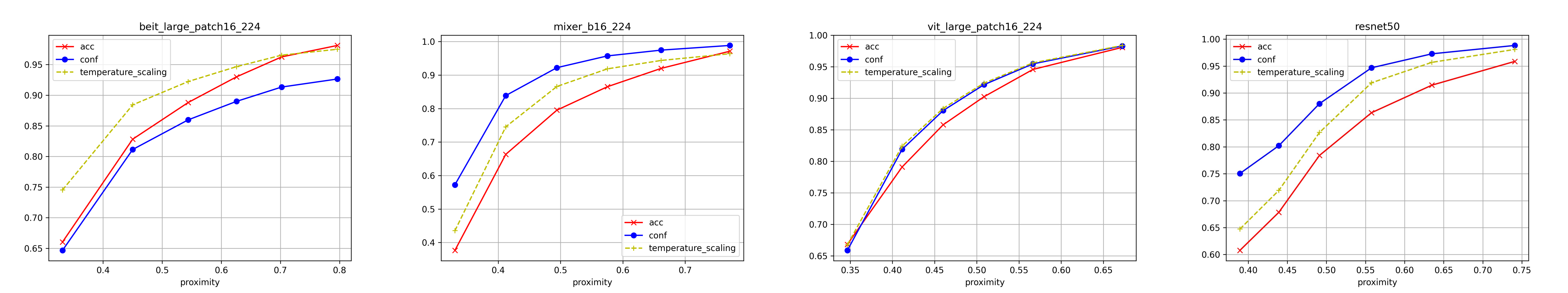}
    \caption{Relations between proximity and its corresponding accuracy, confidence and calibrated confidences. The result shows that confidence and accuracy drop as the sample’s proximity decreases. }
    \label{app:fig:ood-effect} 
\end{figure*}

\section{Additional Experimental Results}
\label{appendix:additional_experiment_results}

\subsection{Inference Efficiency}
\label{appendix-sec:inference-efficiency}
\paragraph{Runtime Efficiency} To verify the time efficiency of our method, we compare the inference time with baseline methods. The result is reported in Table ~\ref{tab:infer_time}. Compared to the confidence baseline, our method, Bin-Mean-Shift, exhibits a slight increase of 1.17\% in runtime, while Density-Ratio introduces a modest overhead of 12.3\%. These results demonstrate that our method incurs minimal computational overhead while achieving comparable runtime efficiency to the other baseline methods. In addition, it is worth noting that the cost of computing proximity has been reduced due to the recent advancement in neighborhood search algorithms. In our implementation, we employ \textit{indexFlatL2} from the Meta open-sourced GPU-accelerated Faiss library~\citep{faiss} to calculate each sample's nearest neighbor. This algorithm enables us to reduce the time for nearest neighbor search to approximately 0.04 ms per sample (shown in Table ~\ref{tab:infer_time}). The computation overhead beyond the neighbor search is actually quite similar to isotonic regression (IR) and histogram binning (HB), which leads to the total time being roughly twice that of isotonic regression ($0.04 + 0.05 \approx 0.1s$).

\begin{table}[h]
    \centering
    \caption{Average inference time(ms) per image on ImageNet on $10$ runs using a ViT/B-16@224px model on a single Nvidia GTX 2080 Ti. $*$ denotes our method (BIN*: Bin Mean-Shift; DEN*: Density-Ratio Calibration). NS indicates the computational time consumed by the nearest neighbor search algorithm. }
\begin{tabular}{c|c|c|c|c|c|c|c|c|c|c|c}
\toprule  & NS& Conf & TS   & ETS  & PTS  & PTSK & HB   & IR   & MIR  & BIN* & DEN*\\
\midrule
Time(ms) & 0.04 & 5.60 & 5.60 & 5.64 & 5.61 & 5.66 & 5.63 & 5.65 & 5.84 & 5.70 & 6.29 \\
\bottomrule
\end{tabular}
\label{tab:infer_time}
\end{table}

\paragraph{Memory Efficiency} Similar to K-nearest-neighbor-based methods~\cite{jiang2018trust,xiong2022birds}, our method requires maintaining a held-out neighbor set for proximity computation during inference. To achieve memory efficiency, we employ these three techniques: 1) Reduce the size of the held-out neighbor set since it is unnecessary to utilize the entire raw neighbor-set, such as the entire training dataset, particularly when dealing with large-scale data in practical scenarios. For instance, our experimental results on ImageNet~\cite{deng2009imagenet} are obtained using a neighbor-set comprising 25,000 randomly sampled images from the validation set. 2) Leverage pre-computed feature embeddings instead of full images, such as ResNext101 embeddings ($d=1024$), which consumed a mere 229MB in our experiments. 3) Leverage memory-efficient neighbor search algorithms to further enhance memory efficiency~\cite{rafiee2019pruned,faiss}.

\subsection{Effectiveness on Datasets with Balanced Class Distribution.}
\label{appendix:balanced-table}

First, we present high-resolution figures of \figureautorefname{~\ref{fig:experiment-imagenet-allmodels}} illustrating the results of 504 models from timm~\cite{rw2019timm} on ImageNet. In \figureautorefname{~\ref{fig:experiment-imagenet-allmodels-large-resolution}, our method (indicated by red color markers) consistently demonstrates the lowest calibration error across all four evaluation metrics, maintaining a consistently superior performance.

\begin{figure}[h]
\centering
    \begin{subfigure}{0.49\textwidth}
        \includegraphics[width=\textwidth]{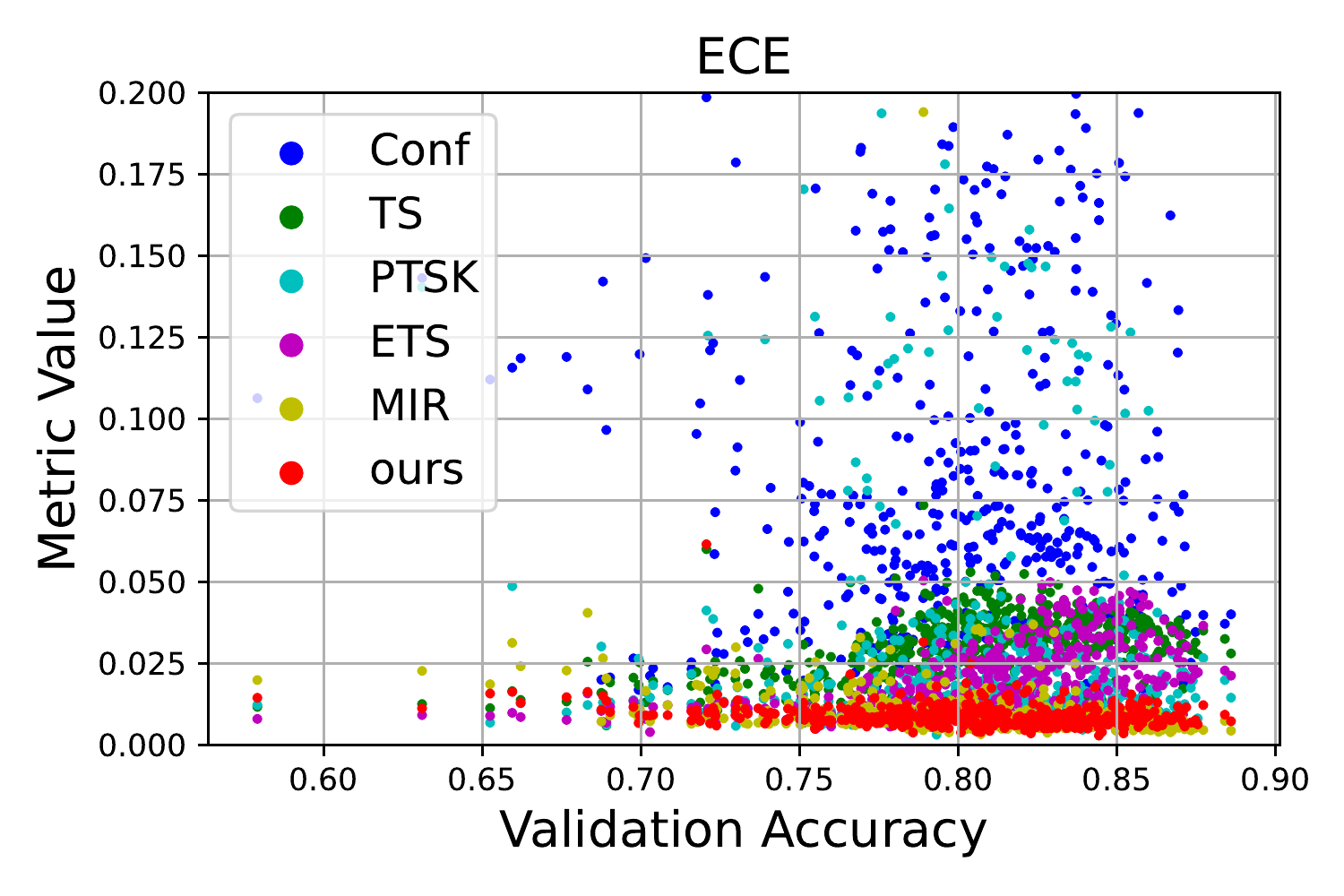}
    \end{subfigure}
    \begin{subfigure}{0.49\textwidth}
        \includegraphics[width=\textwidth]{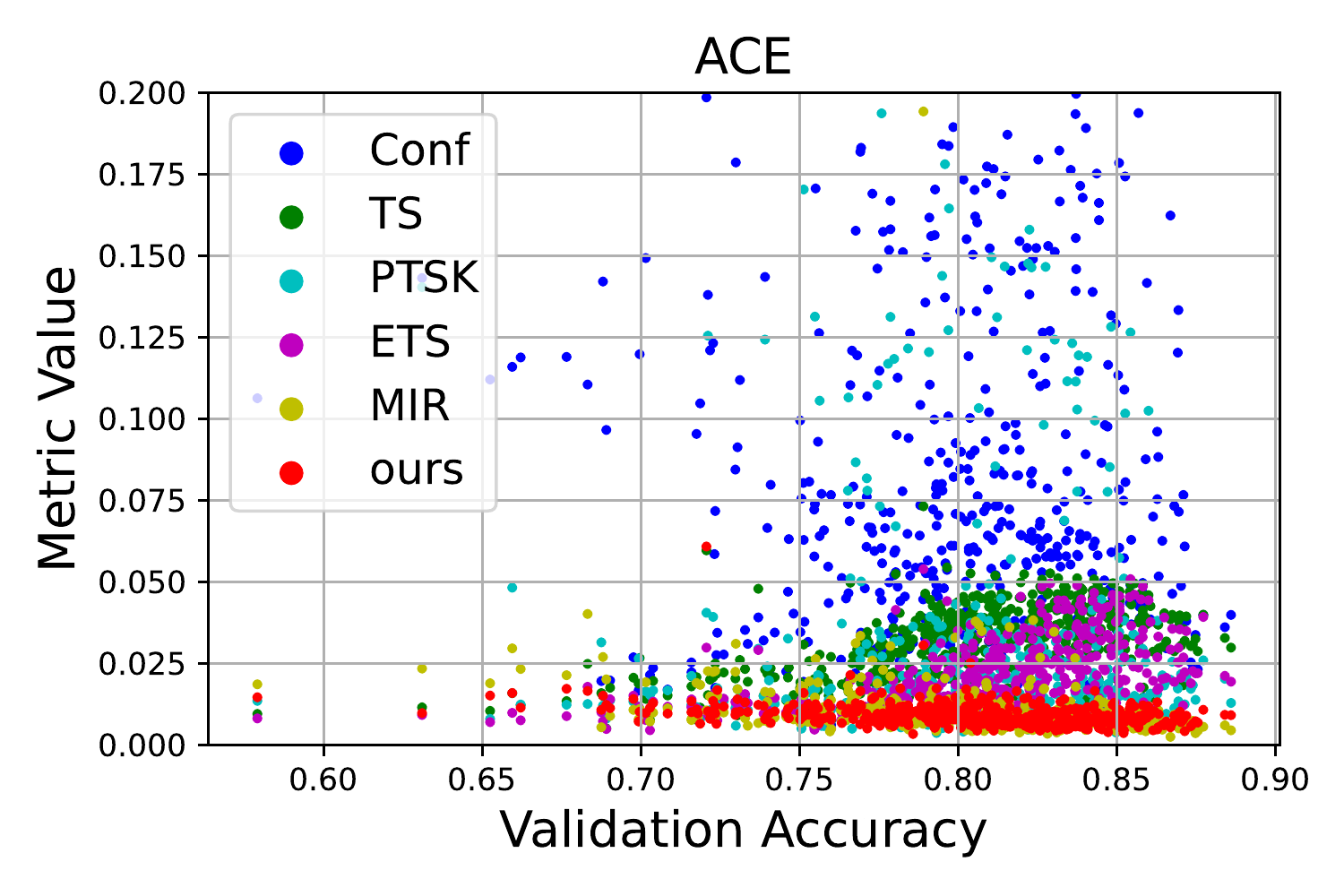}
    \end{subfigure}
    \begin{subfigure}{0.49\textwidth}
        \includegraphics[width=\textwidth]{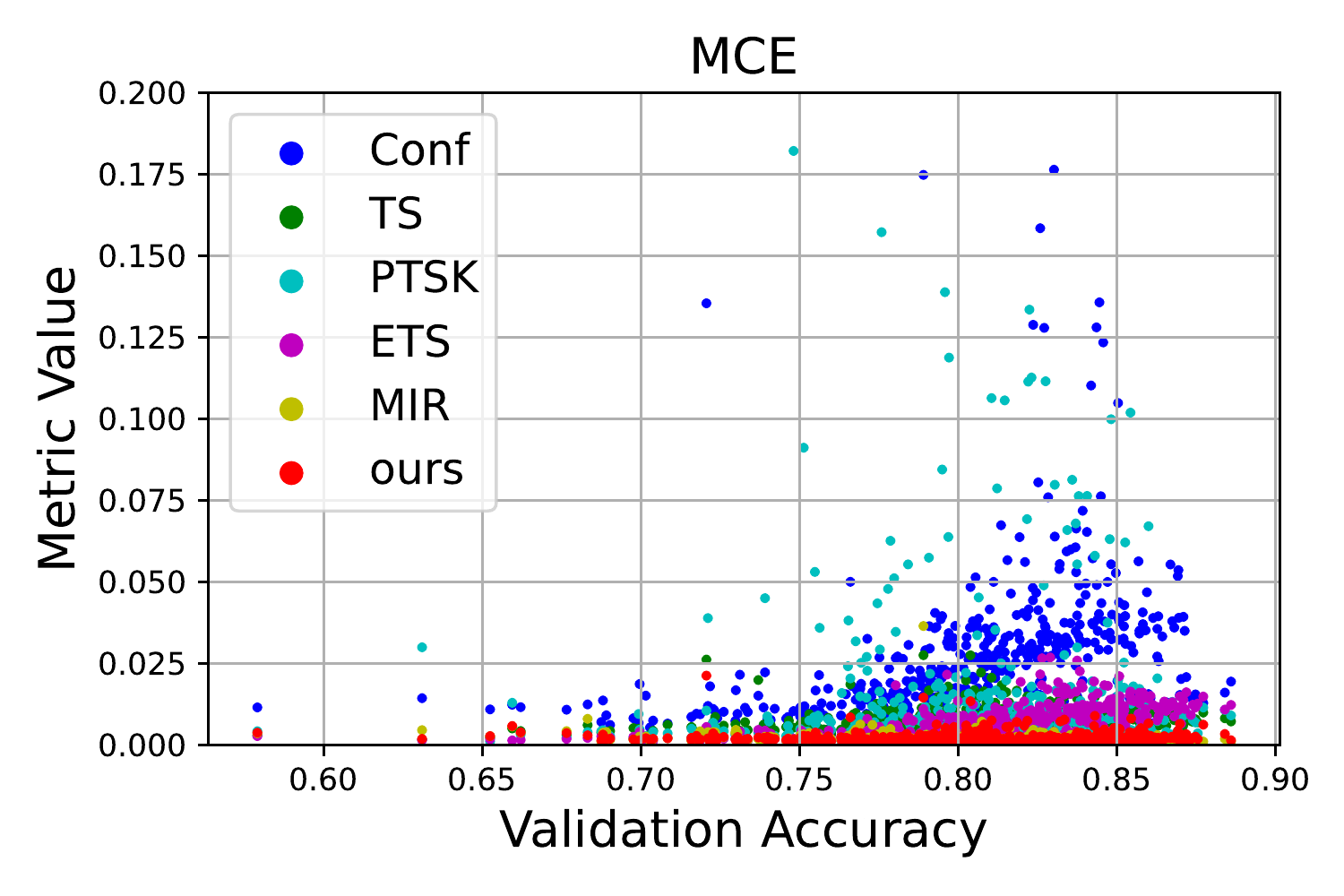}
    \end{subfigure}
    \begin{subfigure}{0.49\textwidth}
        \includegraphics[width=\textwidth]{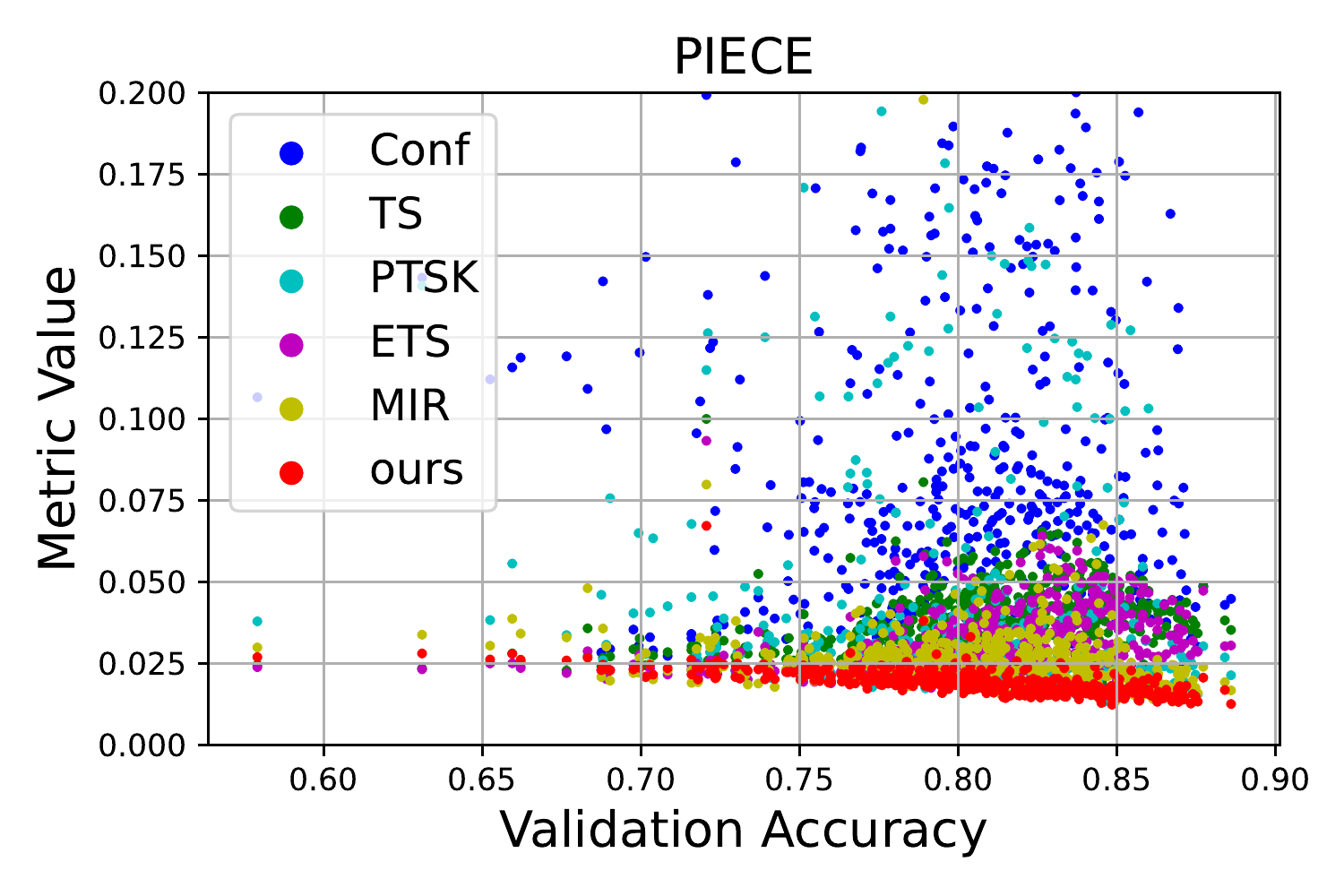}
    \end{subfigure}

\caption{Calibration errors on ImageNet across 504 \texttt{timm} models. Each dot represents the calibration results of applying a calibration method to the model confidence. Marker colors indicate different calibration algorithms used. Among all calibration algorithms, our method consistently appears at the bottom of the plot.}
\label{fig:experiment-imagenet-allmodels-large-resolution}
\end{figure}

\begin{table}[htbp]
\centering
\caption{Comparison of calibration errors in $10^{-2}$ between existing calibration methods (`base') and our proximity-informed framework (`ours'), on \texttt{ImageNet} dataset. ($*$ means $\mathrm{p}=0.01)$}
\label{tab:imagenet-4-models-appendix}
 \resizebox{0.9\textwidth}{!}{
\begin{tabular}{clllllllll} 
\toprule
                          &        & \multicolumn{2}{c}{ECE ~$\downarrow$}             & \multicolumn{2}{c}{ACE ~$\downarrow$}             & \multicolumn{2}{c}{MCE ~$\downarrow$}            & \multicolumn{2}{c}{PIECE ~$\downarrow$}            \\ 
\cmidrule(r){3-4} \cmidrule(r){5-6} \cmidrule(r){7-8} \cmidrule(r){9-10}
Model                     & Method & \multicolumn{1}{c}{base}             & \multicolumn{1}{c}{+ours}            & \multicolumn{1}{c}{base}             & \multicolumn{1}{c}{+ours}            & \multicolumn{1}{c}{base}            & \multicolumn{1}{c}{+ours}            & \multicolumn{1}{c}{base}             & \multicolumn{1}{c}{+ours}             \\ 
\midrule
\multirow[c]{7}{*}{BeiT} & Conf & 3.6137 & \textbf{0.8573}* & 3.5464 & \textbf{0.7205}* & 1.5801 & \textbf{0.2866}* & 4.2348 & \textbf{1.5379}* \\
 & ETS & 2.1318 & \textbf{0.9930}* & 2.1862 & \textbf{0.9023}* & 1.2155 & \textbf{0.4333}* & 2.9592 & \textbf{1.5872}* \\
 & HB & \textbf{4.8765}* & 5.5631 & 6.1728 & \textbf{5.9747} & \textbf{1.8383}* & 4.1239 & 7.2174 & \textbf{6.2886}* \\
 & MIR & \textbf{0.4509} & 0.536 & \textbf{0.5376} & 0.5455 & \textbf{0.1065} & 0.1395 & 1.8039 & \textbf{1.2645}* \\
 & PTS & 1.2685 & \textbf{0.9787} & 1.2744 & \textbf{0.8106} & 0.4858 & \textbf{0.4240} & 1.9782 & \textbf{1.6890} \\
 & PTSK & 1.8861 & \textbf{1.0288} & 1.9150 & \textbf{0.8582}* & 0.7934 & \textbf{0.5022} & 2.7093 & \textbf{1.6168}* \\
 & TS & 2.9894 & \textbf{1.3277}* & 3.1132 & \textbf{1.2493}* & 0.7388 & \textbf{0.7046} & 3.5366 & \textbf{1.9264}* \\
\midrule
\multirow[c]{7}{*}{Mixer} & Conf & 10.9366 & \textbf{2.8498}* & 10.9337 & \textbf{2.7162}* & 5.1519 & \textbf{1.0944}* & 11.0164 & \textbf{3.6485}* \\
 & ETS & 1.9381 & \textbf{1.3586}* & 2.1859 & \textbf{1.2210}* & 0.3204 & \textbf{0.2986} & 4.1034 & \textbf{2.2010}* \\
 & HB & 9.1774 & \textbf{6.7207}* & 9.7965 & \textbf{7.4825}* & \textbf{3.5964}* & 4.3997 & 12.9240 & \textbf{7.8672}* \\
 & MIR & 1.1128 & \textbf{0.9272} & 1.2190 & \textbf{0.9360}* & 0.2628 & \textbf{0.1430}* & 3.3912 & \textbf{2.2112}* \\
 & PTS & 5.7741 & \textbf{2.2208} & 5.8027 & \textbf{2.1298} & 2.8498 & \textbf{0.6940} & 9.3215 & \textbf{3.0623}* \\
 & PTSK & 6.6610 & \textbf{1.9466}* & 6.6173 & \textbf{1.7905}* & 3.1011 & \textbf{0.6131}* & 8.3148 & \textbf{2.7503}* \\
 & TS & 5.1937 & \textbf{1.6499}* & 5.0234 & \textbf{1.4455}* & 2.0189 & \textbf{0.4255}* & 5.8958 & \textbf{2.4809}* \\
\midrule
\multirow[c]{7}{*}{ResNet50} & Conf & 8.7246 & \textbf{2.7752}* & 8.6852 & \textbf{2.6344}* & 4.6122 & \textbf{1.3005}* & 8.9113 & \textbf{3.4224}* \\
 & ETS & 2.7620 & \textbf{1.6548}* & 3.6581 & \textbf{1.6627}* & 0.6624 & \textbf{0.5676} & 3.5750 & \textbf{2.4579}* \\
 & HB & 7.6311 & \textbf{6.1812}* & 9.3289 & \textbf{7.5380}* & \textbf{2.6377}* & 4.3484 & 10.1849 & \textbf{7.7372}* \\
 & MIR & 1.0281 & \textbf{0.9533} & 0.9643 & \textbf{0.8436} & \textbf{0.2062} & 0.2095 & 1.9751 & \textbf{1.8776}* \\
 & PTS & 2.2196 & \textbf{1.0138}* & 2.2098 & \textbf{1.0278}* & 0.6319 & \textbf{0.2529} & 4.0331 & \textbf{2.0445}* \\
 & PTSK & 4.4100 & \textbf{1.6015}* & 4.3761 & \textbf{1.5015}* & 1.8375 & \textbf{0.4980} & 5.4413 & \textbf{2.5003}* \\
 & TS & 5.1181 & \textbf{1.7964}* & 5.0864 & \textbf{1.7300}* & 2.5503 & \textbf{0.6881}* & 5.4640 & \textbf{2.5721}* \\
\midrule
\multirow[c]{7}{*}{ViT} & Conf & 1.1815 & \textbf{0.9016}* & 1.1839 & \textbf{0.7554}* & 0.3489 & \textbf{0.2543} & 1.9984 & \textbf{1.7540}* \\
 & ETS & 1.1080 & \textbf{0.9074} & 1.1791 & \textbf{0.7732}* & 0.2745 & \textbf{0.2684} & 1.9273 & \textbf{1.7533} \\
 & HB & \textbf{4.6920}* & 6.8776 & \textbf{7.1771} & 7.5824 & \textbf{2.4332}* & 4.8925 & \textbf{7.3239}* & 7.7552 \\
 & MIR & 0.8934 & \textbf{0.8638} & \textbf{0.8180} & 0.8537 & \textbf{0.1893} & 0.2182 & 1.7695 & \textbf{1.6818} \\
 & PTS & 1.0609 & \textbf{0.7940} & 1.0257 & \textbf{0.7124} & 0.3937 & \textbf{0.2600} & 2.0929 & \textbf{1.7510}* \\
 & PTSK & 1.6493 & \textbf{0.8966} & 1.5874 & \textbf{0.8139} & 0.6035 & \textbf{0.2957} & 2.5798 & \textbf{1.8211} \\
 & TS & 1.4905 & \textbf{0.9047}* & 1.4465 & \textbf{0.7880}* & 0.5069 & \textbf{0.2806}* & 2.1462 & \textbf{1.7562}* \\
\bottomrule
\end{tabular}
}
\end{table}

Second, we select four widely used ImageNet pre-trained models from the pool of 504 models depicted in \figureautorefname{~\ref{fig:experiment-imagenet-allmodels-large-resolution}}. We compare the performance of our proposed approach with baseline calibration algorithms, namely \texttt{BeiT}, \texttt{MLP Mixer}, \texttt{ResNet50} and \texttt{ViT}.  The detailed results are presented in Table~\ref{tab:imagenet-4-models-appendix}, showcasing the effectiveness of our methods in mitigating proximity bias and enhancing calibration performance compared to existing calibration techniques. Additionally, even when applied to the most successful base calibration algorithm, our method achieves a notable reduction in calibration errors. This consistent improvement is particularly remarkable, especially in scenarios where the original calibration methods fail to enhance or even worsen performance.

Third, we present the outcomes of our approach on the Natural Language Understanding task, specifically on the MultiNLI Match dataset, as displayed in \tableautorefname{~\ref{table:mnli-match-kde-bin-both}}. The results demonstrate that our method can improve confidence calibration performance in balanced datasets and achieve comparable performance in addressing proximity bias.
 
\begin{table}[h]
\centering
\caption{Results of MultiNLI Match dataset on RoBERTa-base Model that is fine-tuned on MultiNLI Match. `Base' refers to existing calibration methods and `Ours' refers to our method applied to existing calibration methods. Calibration error is given by $\times 10^{-2}$.}
\label{table:mnli-match-kde-bin-both}
\begin{tabular}{clllllllll} 
\toprule
                     & \multicolumn{2}{c}{ECE ~$\downarrow$} & \multicolumn{2}{c}{ACE ~$\downarrow$} & \multicolumn{2}{c}{MCE ~$\downarrow$} & \multicolumn{2}{c}{PIECE ~$\downarrow$} \\
                     \cmidrule(r){2-3} \cmidrule(r){4-5} \cmidrule(r){6-7} \cmidrule(r){8-9}
                    Method & \multicolumn{1}{c}{base} & \multicolumn{1}{c}{+ours} & \multicolumn{1}{c}{base} & \multicolumn{1}{c}{+ours} & \multicolumn{1}{c}
                    {base} & \multicolumn{1}{c}{+ours} & \multicolumn{1}{c}{base} & \multicolumn{1}{c}{+ours}  \\ 
\midrule
Conf & 2.39 & \textbf{1.71} & 2.68 & \textbf{2.03} & 0.54 & \textbf{0.36} & 4.07 & \textbf{3.70} \\
TS & 1.68 & \textbf{1.53} & 1.96 & \textbf{1.86} & 0.49 & \textbf{0.40} & 4.19 & \textbf{3.45} \\
ETS & 1.88 & \textbf{1.57} & 2.08 & \textbf{1.76} & 0.70 & \textbf{0.44} & 3.97 & \textbf{3.61} \\
PTS & 9.28 & \textbf{3.48} & 9.25 & \textbf{3.49} & 5.86 & \textbf{1.13} & 9.44 & \textbf{5.29} \\
PTSK & 11.94 & \textbf{6.10} & 11.91 & \textbf{6.15} & 10.24 & \textbf{4.03} & 12.23 & \textbf{6.80} \\
HB & 2.07 & \textbf{1.80} & 2.10 & \textbf{2.10} & 1.03 & \textbf{0.67} & 4.63 & \textbf{3.67} \\
IR & 1.30 & \textbf{1.23} & 1.71 & \textbf{1.54} & \textbf{0.30} & 0.45 & 3.70 & \textbf{3.66} \\
MIR & \textbf{1.02} & 1.04 & 1.22 & \textbf{1.22} & 0.34 & \textbf{0.32} & 3.73 & \textbf{3.35} \\
\cline{1-9}
\bottomrule
\end{tabular}
\end{table}

\subsection{Effectiveness on Datasets with the Long-tail Class Distribution.}
\label{appendix:longtail-data}

\begin{table}
\centering
\caption{
Performance of our proposed framework against base methods on ImageNet-LT using the pretrained ResNet50 model with classifier re-training techniques~\cite{cRT}. The symbol $*$ denotes that the method is significantly better than the other one with a confidence level of at least 90\%.}
\label{tab:imagenet-lt}
\label{tab:imagenet-lt-appendix}
\resizebox{\linewidth}{!}{
\begin{tabular}{lllllllll} 
\toprule
       & \multicolumn{2}{c}{ECE~$\downarrow$}                              & \multicolumn{2}{c}{MCE~$\downarrow$}                              & \multicolumn{2}{c}{ACE~$\downarrow$}                              & \multicolumn{2}{c}{PIECE~$\downarrow$}                             \\ 
\cmidrule(l){2-9}
Method & \multicolumn{1}{c}{base} & \multicolumn{1}{c}{+ours} & \multicolumn{1}{c}{base} & \multicolumn{1}{c}{+ours} & \multicolumn{1}{c}{base} & \multicolumn{1}{c}{+ours} & \multicolumn{1}{c}{base} & \multicolumn{1}{c}{+ours}  \\ 
\midrule
conf   & 7.4933                   & \textbf{1.8724}*          & 0.8672                   & \textbf{0.3380}*          & 7.4610                   & \textbf{2.0168}*          & 7.9229                   & \textbf{3.9776}*           \\
TS     & 2.1965                   & \textbf{1.8541}           & \textbf{0.3100}          & 0.3956                  & 2.0580                   & \textbf{1.7472}           & \textbf{3.8407}          & 3.8478                   \\
ETS    & 2.2704                   & \textbf{1.8642}           & \textbf{0.3264}          & 0.3926                  & 2.0648                   & \textbf{1.7622}           & 4.0024                   & \textbf{3.8520}            \\
PTS    & 5.0023                   & \textbf{1.7300}*          & 0.6512                   & \textbf{0.3160}           & 4.9927                   & \textbf{1.7401}*          & 7.6502                   & \textbf{3.9018}*           \\
PTSK   & 9.1862                   & \textbf{1.8558}           & 1.7478                   & \textbf{0.3475}           & 9.2080                   & \textbf{1.9875}           & 11.4242                  & \textbf{3.9175}*           \\
HB     & 13.4644                  & \textbf{12.9257}*         & 7.5672                   & \textbf{7.5670}           & 13.5931                  & \textbf{13.4697}          & 16.2915                  & \textbf{15.3259}*          \\
IR     & \textbf{8.0541}          & 8.5778                  & 1.0642                   & \textbf{1.0508}           & \textbf{7.9947}          & 8.5546                  & \textbf{8.8631}          & 9.3678                   \\
MIR    & 1.6207                   & \textbf{1.5063}           & \textbf{0.3205}          & 0.3217                  & 1.6562                   & \textbf{1.5310}           & 3.7481                   & \textbf{3.6285}            \\
\bottomrule
\end{tabular}
}
\vskip -0.2in
\end{table}
To evaluate our method in large-scale long-tail datasets, we conduct experiments on long-tail datasets \texttt{ImageNet-LT}, and \texttt{iNaturalist 2021}. Table~\ref{tab:imagenet-lt-appendix} shows our method's performance on ImageNet-LT, showing that our algorithm improves upon the original calibration algorithms in most cases under all four evaluation metrics, particularly on ECE, ACE, and PIECE. This suggests that our algorithm can effectively mitigate the bias towards low proximity samples (i.e. tail classes), highlighting its practicality in real-world scenarios where data is often imbalanced.

\subsection{Effectiveness on Datasets with Distribution Shift}

To further evaluate the effectiveness of our proposed method in handling distribution shifts, we conduct experiments on the ImageNet Corruption dataset~\cite{imagenetc}. The base models are trained on the ImageNet dataset without any distribution shifts and are subsequently tested on the corrupted dataset. The results, shown in Figure~\ref{fig:imagenetc-daece-small}, indicate that our method is able to consistently improve the performance of the original model even in the presence of distribution shifts. This is particularly notable in the case of data with distribution shifts that tend to fall in the low proximity region, where current algorithms tend to be poorly calibrated, but our method effectively addresses this problem. 

\begin{figure*}[htbp]
\centering
\includegraphics[width=0.99\linewidth]{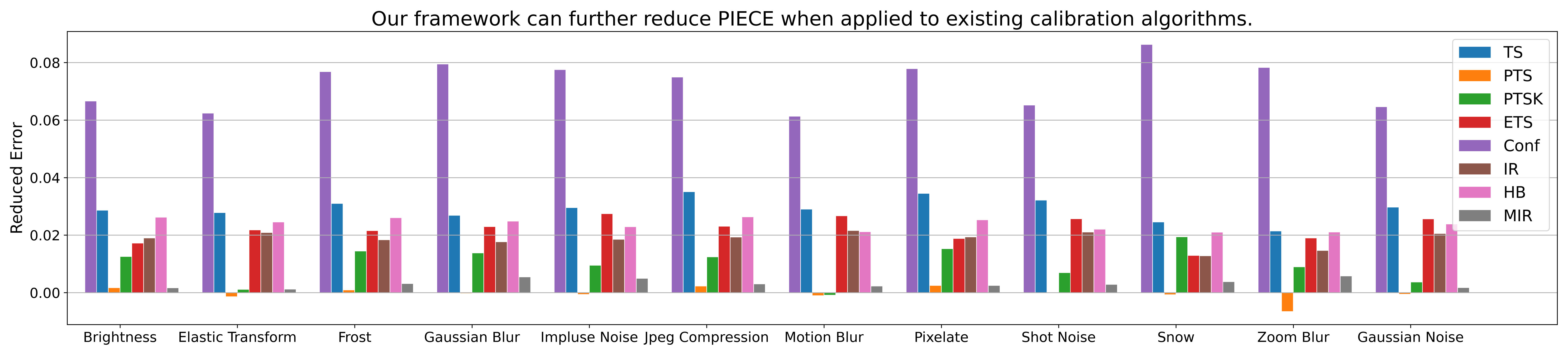}
\vskip -0.12in
\caption{The calibration error reduction in PIECE aachieved by integrating our method with existing calibration algorithms. Different colors indicate different base calibration algorithms. Each color represents a different base calibration algorithm. The bar indicates the difference in calibration error between the base algorithm and the one enhanced by our approach.
}
\vskip -0.12in
\label{fig:imagenetc-daece-small}
\end{figure*}

To explore the issue of domain shift, where test data has a shifted distribution not seen in training, we conduct experiments using ImageNet (training set) and ImageNet-Sketch (test set). The datasets are chosen because all ImageNet images are real-world photos, while all ImageNet-Sketch images are sketches, collected using Google Search with class label keywords and "sketch", similar to the case of skin color example provided in the introduction. We employ a Vision Transformer backbone from \textsc{timm}, trained on the ImageNet. Then we train our ProCal using the validation set from ImageNet and tested it on the 50,000 images from ImageNet-Sketch. The result shown in \tableautorefname{~\ref{tab:imagenet-sketch}} shows that ProCal effectively improves upon existing algorithm in many cases. While we observe a slight increase in ECE, ACE, and MCE when ProCal is paired with with PTS and PTSK, this is probably attributed to the original methods suffering from the \emph{cancellation effect}, where positive and negative calibration errors within the same confidence bin cancel out each other (see \sectionautorefname{~\ref{sec:evaluation-metrics}}). Under the PIECE metric that captures the cancellation effect, our method consistently outperforms all methods by large margins and effectively mitigates their proximity bias.

\begin{table}[t]
\vspace{-2mm}
\centering
\caption{Calibration performance of ViT-Large (vit\_large\_patch14\_clip\_336) from timm~\citep{rw2019timm} on ImageNet-Sketch dataset. Calibrators are trained using ImageNet validation set and tested on ImageNet-Sketch. `base' refers to the methods, `+ours' shows the performance after integrating our method. Note that `Conf+Ours' shows the result of our method applied directly to model confidence. Calibration error is given by $\times 10^{-2}$. }
\label{tab:imagenet-sketch}
\begin{tabular}{clllllllll} 
\toprule
         & \multicolumn{2}{c}{ECE ~$\downarrow$} & \multicolumn{2}{c}{ACE ~$\downarrow$} & \multicolumn{2}{c}{MCE ~$\downarrow$} & \multicolumn{2}{c}{PIECE ~$\downarrow$} \\
         \cmidrule(r){2-3} \cmidrule(r){4-5} \cmidrule(r){6-7} \cmidrule(r){8-9}
        Method & \multicolumn{1}{c}{base} & \multicolumn{1}{c}{+ours} & \multicolumn{1}{c}{base} & \multicolumn{1}{c}{+ours} & \multicolumn{1}{c}
        {base} & \multicolumn{1}{c}{+ours} & \multicolumn{1}{c}{base} & \multicolumn{1}{c}{+ours}  \\ 
        \midrule
Conf & 3.91 & \textbf{1.96} & 3.92 & \textbf{1.92} & 0.95 & \textbf{0.35} & 6.33 & \textbf{2.92} \\
TS & 7.60 & \textbf{2.38} & 7.57 & \textbf{2.38} & 1.42 & \textbf{0.57} & 7.95 & \textbf{3.27} \\
ETS & 3.16 & \textbf{2.07} & 3.22 & \textbf{2.03} & \textbf{0.37} & 0.39 & 4.92 & \textbf{2.97} \\
PTS & 1.62 & \textbf{1.60} & \textbf{1.65} & 1.66 & 0.34 & \textbf{0.25} & 3.89 & \textbf{2.74} \\
PTSK & 3.14 & \textbf{1.15} & 3.16 & \textbf{1.11} & 0.96 & \textbf{0.13} & 5.71 & \textbf{2.58} \\
MIR & \textbf{0.28} & 1.22 & \textbf{0.23} & 1.19 & \textbf{0.12} & 0.22 & 4.24 & \textbf{2.73} \\
\bottomrule
\end{tabular}
\end{table}

\section{Ablation Study}
\label{appendix:ablation}

\subsection{Comparison between Density-Ratio and Bin-Mean-shift. }
\label{appendix:comparison-Density-Ratio-vs-Bin-Mean-Shift}
 Our comparison of Density-Ratio and Bin-Mean-shift reveals that Density-Ratio performs better in scaling-based methods while Bin-Mean-Shift demonstrates general robustness and adaptability in both continuous and discrete settings. This can be attributed to the fact that Density-Ratio can be thought of as an infinite binning-based method, enjoying good expressiveness, but it relies on density estimation which may not be as accurate when dealing with discrete outputs. In contrast, Bin-Mean-shift does not make any assumptions about the output. Therefore, it is important to note that both techniques have their own strengths and weaknesses, and the choice of which one to use should be based on the specific task at hand.

\subsection{Hyperparameter Sensitivity}
\label{appendix:hyperparameter-sensitivity}

In this section, we evaluate the sensitivity of our method to various hyperparameter choices. Specifically, we examine the impact of the choice of distance metric and the number of neighbors in the local neighborhood. To evaluate the performance of our method under different hyperparameter settings, we use ResNet50~\cite{ResNet50} as our base model and compare its calibration performance  of the integration of our method and existing popular claibration algorithms under different hyperparameters. This study aims to provide a guideline for selecting appropriate hyperparameters when using our method.

\begin{figure*}[h]
    \centering
    \begin{subfigure}{0.49\textwidth}
        \includegraphics[width=\textwidth]{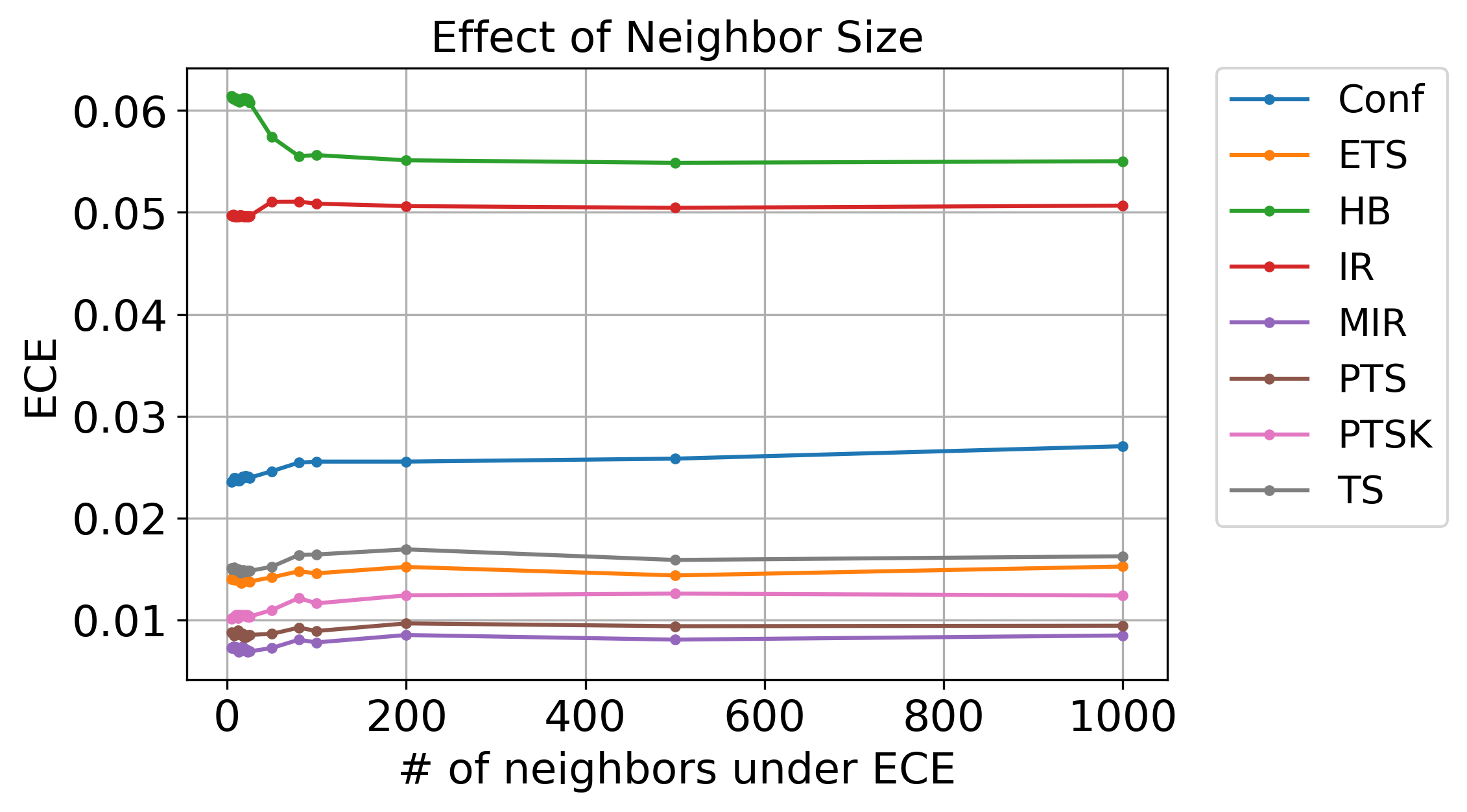}
    \end{subfigure}
    \begin{subfigure}{0.49\textwidth}
        \includegraphics[width=\textwidth]{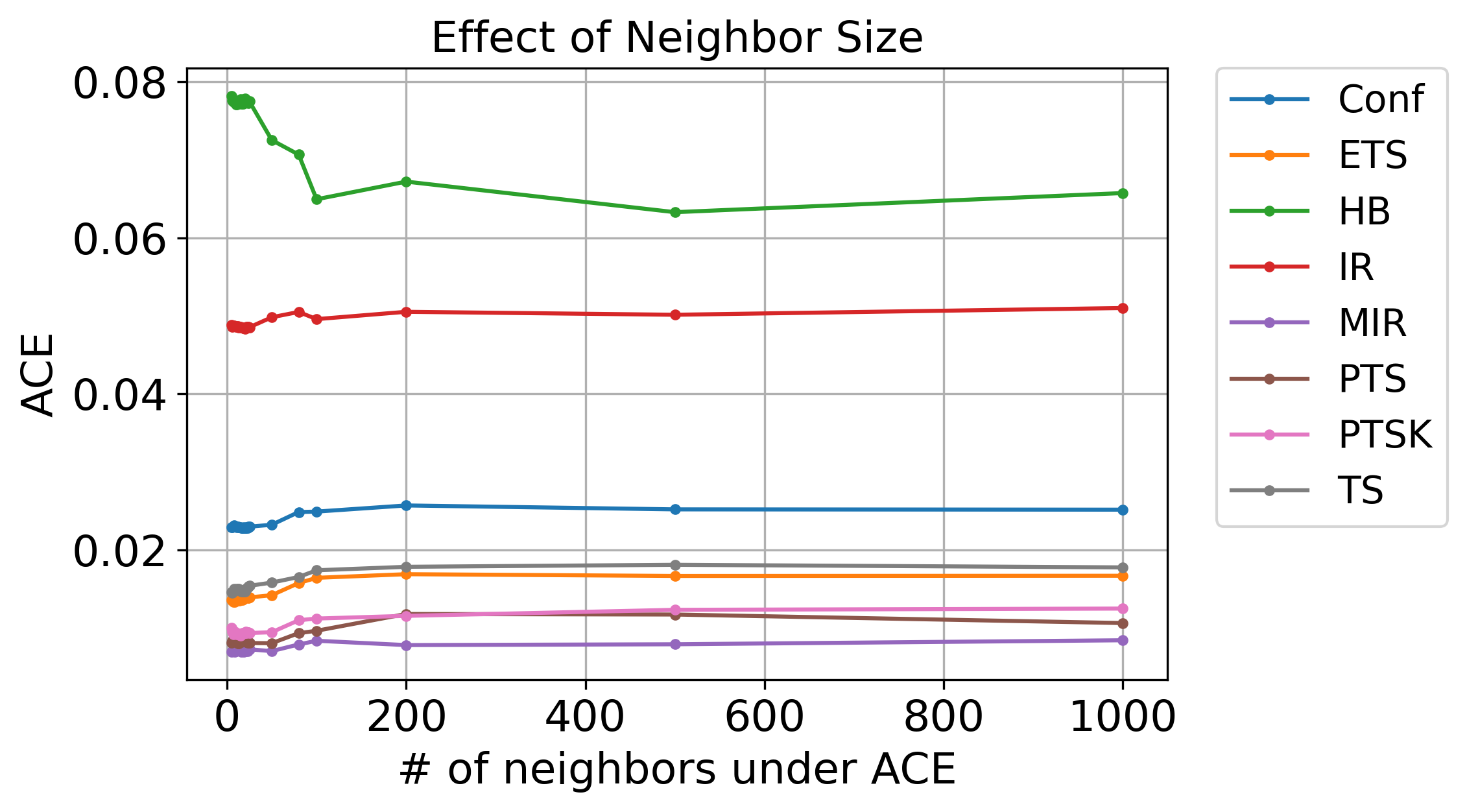}
    \end{subfigure}
    \begin{subfigure}{0.49\textwidth}
        \includegraphics[width=\textwidth]{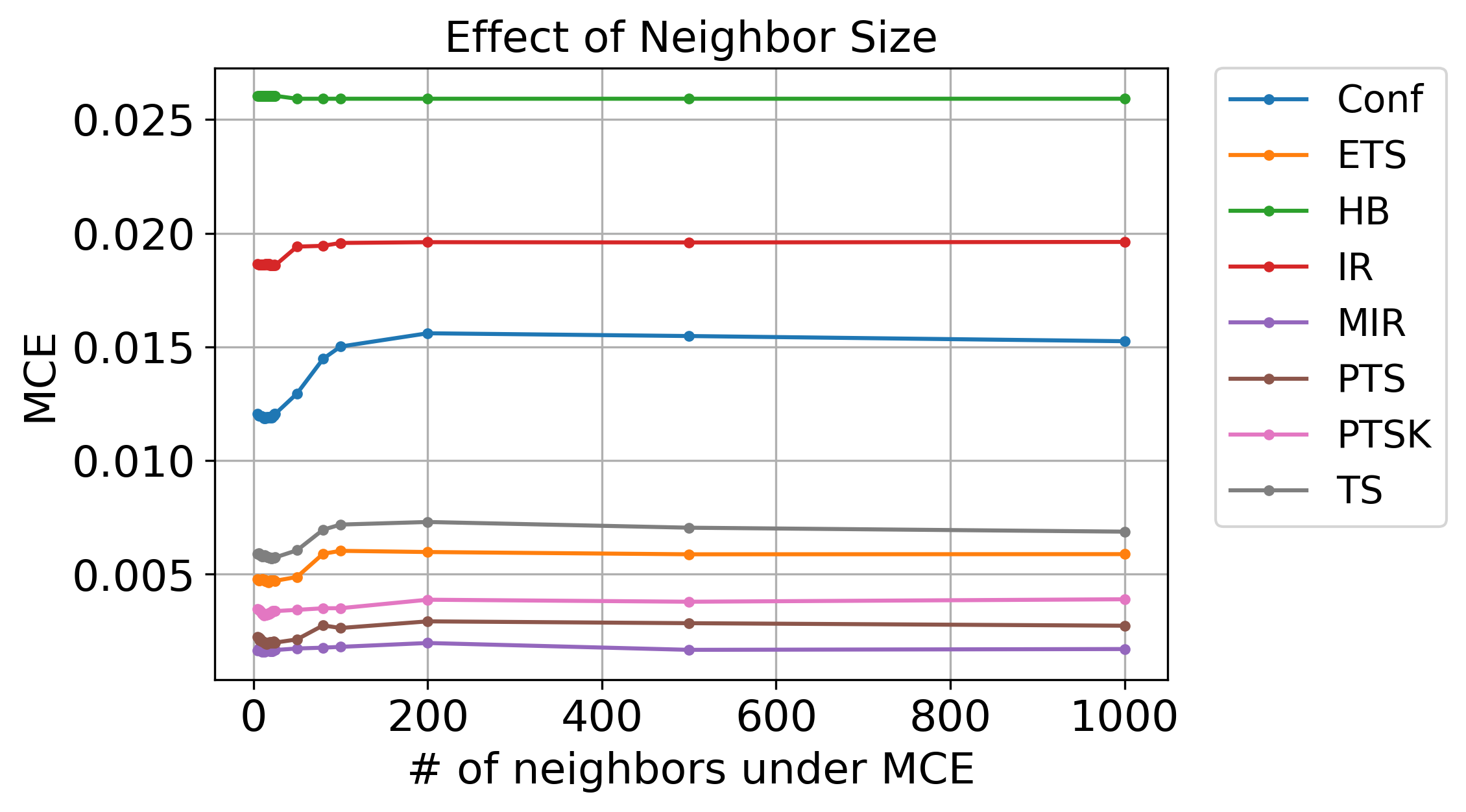}
    \end{subfigure}
    \begin{subfigure}{0.49\textwidth}
        \includegraphics[width=\textwidth]{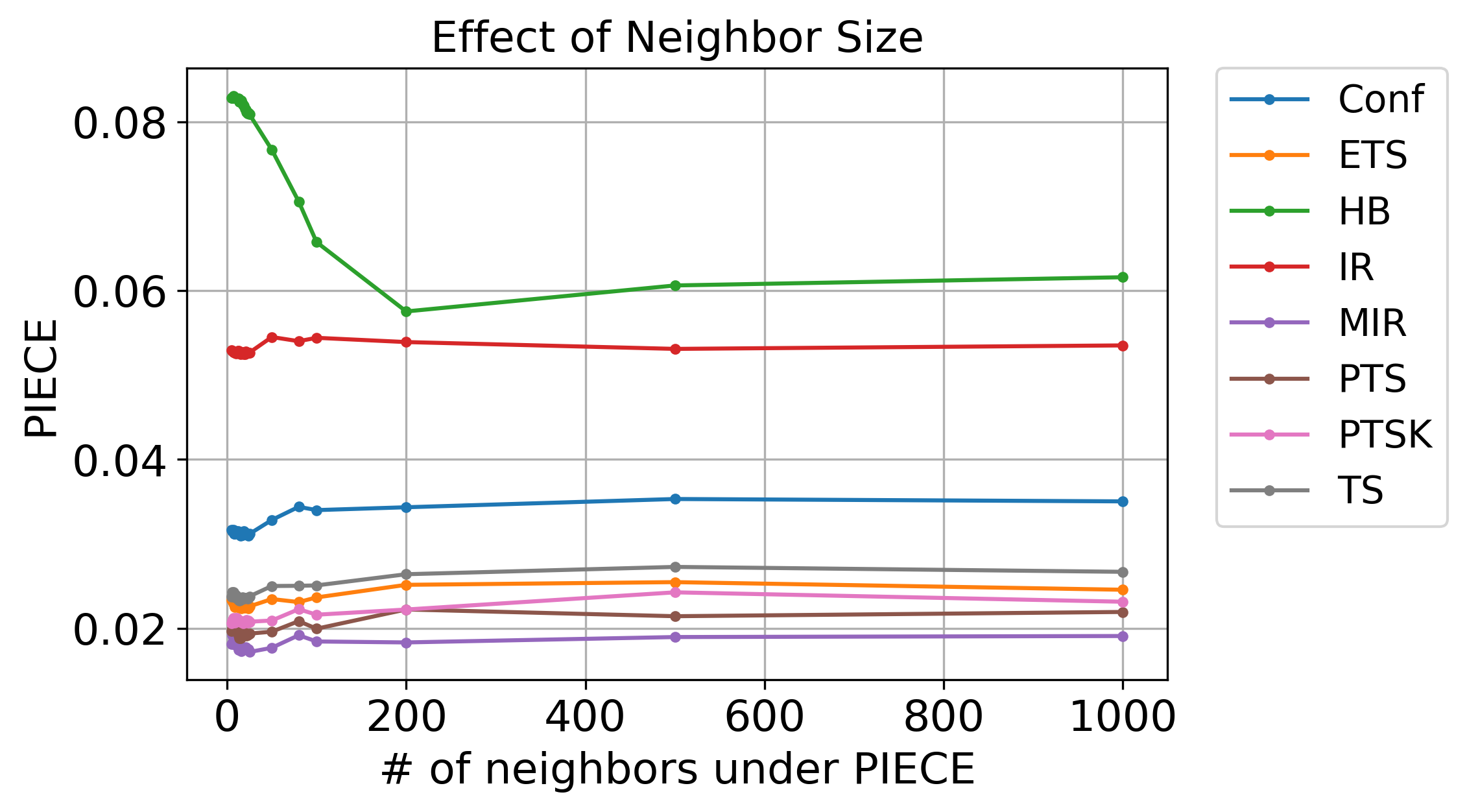}
    \end{subfigure}
    \caption{Hyperparameter sensitivity of the number of neighbors used in computing proximity. The results reveal a V-shaped relationship between the number of neighbors and performance. Initially, an increase in the number of neighbors from 1 yields increasing performance improvement. However, when the number of neighbors exceeds a certain threshold (e.g. $K>50$), performance begins to deteriorate. }
    \label{app:fig:hyper-k-all}
\end{figure*}

\paragraph{Effect of Neighbor Size $K$}
In this study, we examine the influence of the number of neighbors ($K$) on the performance of our proposed method. We assess the performance by varying the number of neighbors from $K=1$ to $K=1000$ and comparing the results. Figure~\ref{app:fig:hyper-k-all} illustrates the impact of neighbor size $K$ on performance. The results reveal a V-shaped relationship between the number of neighbors and performance. Initially, an increase in the number of neighbors from 1 yields increasing performance improvement. However, when the number of neighbors exceeds a certain threshold (e.g. $K>50$), performance begins to deteriorate as the neighborhood becomes more global rather than local, eventually reaching a saturation point. Notably, a small neighborhood size of $K=10$ is sufficient to capture the local neighborhood, and further increasing the neighborhood size does not yield additional benefits. This finding aligns with previous works~\cite{jiang2018trust,xiong2022birds} that demonstrate the insensitivity of proximity to the choice of $K$. Based on these findings, we recommend employing a moderate range of neighbors, specifically between 10 and 50, to achieve optimal performance.

\paragraph{Effect of Distance Measure}
In this study, we explore the impact of different distance measures on the performance of our method. We compare the performance under four distance measures: L2 (Euclidean distance), cosine similarity, IVFFlat, and IVFPQ. L2 distance and cosine similarity are widely used measure in machine learning. IVFFlat and IVFPQ are approximate nearest-neighbor search methods implemented using the \texttt{faiss} library~\cite{faiss}. IVFFlat is memory-efficient and suitable for high-dimensional datasets, while IVFPQ is optimized for datasets with a large number of points and high-dimensional features. The results are depicted in \figureautorefname{~\ref{app:fig:hyper-proximity-all}}. Overall, we observe minimal performance differences across the various distance measures. Specifically, the use of cosine similarity and L2 distance yields comparable performance across the four calibration metrics. Additionally, employing IVFFlat results in slightly smaller calibration errors. However, the performance disparity is not significant. Different distance measures, including the approximation methods, offer similar performance improvements. This is because the density function for proximity values is smooth, and small measurement noise does not significantly impact the final density estimation. Considering efficiency, we recommend utilizing IVFFlat due to its favorable efficiency characteristics. However, it is important to note that the choice of the best distance measure depends on the specific problem and dataset, as each measure may exhibit varying performance in different scenarios.

\begin{figure}[h]
    \centering
    \begin{subfigure}{0.49\textwidth}
        \centering
        \includegraphics[width=\textwidth]{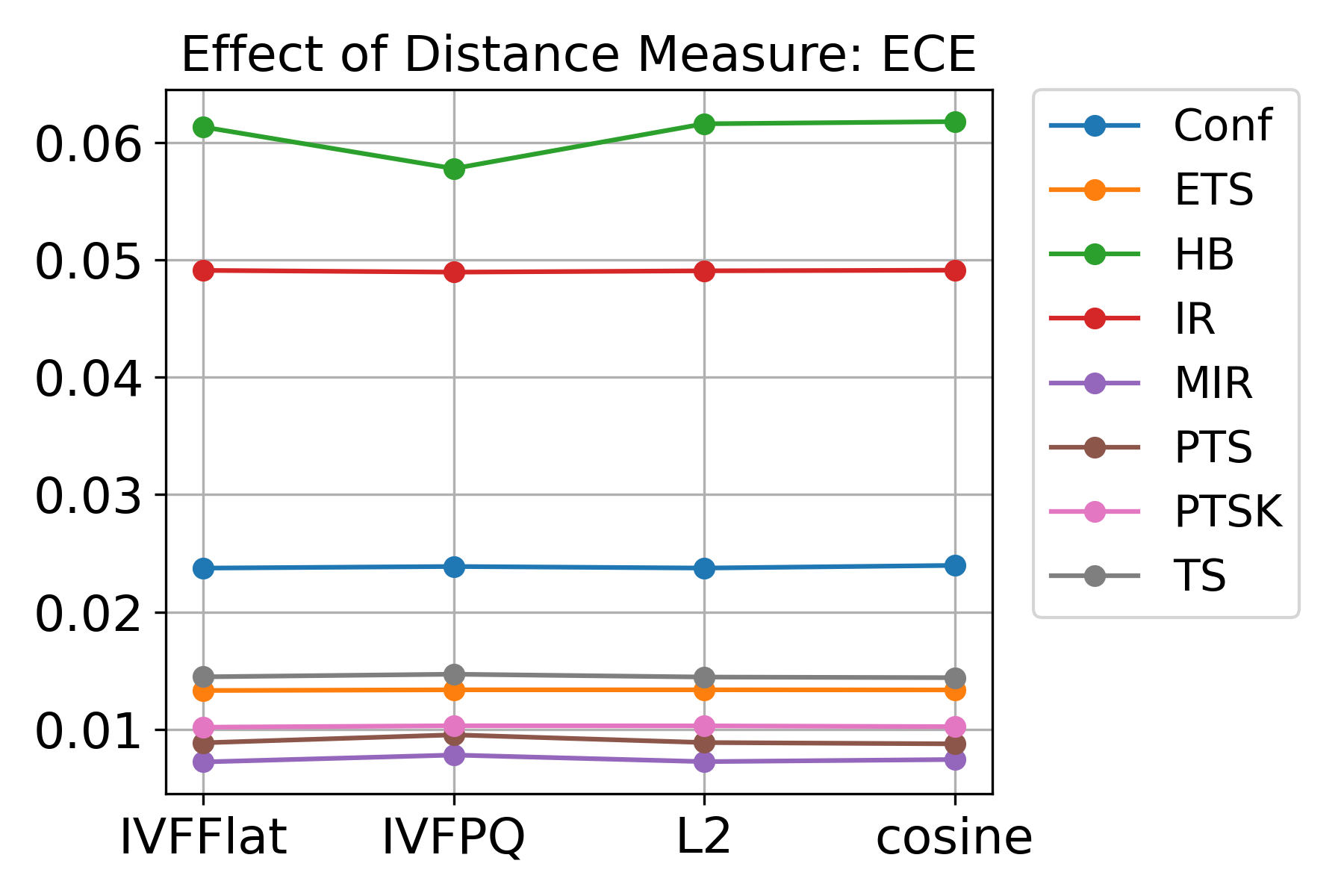}
    \end{subfigure}
    \begin{subfigure}{0.49\textwidth}
        \centering
        \includegraphics[width=\textwidth]{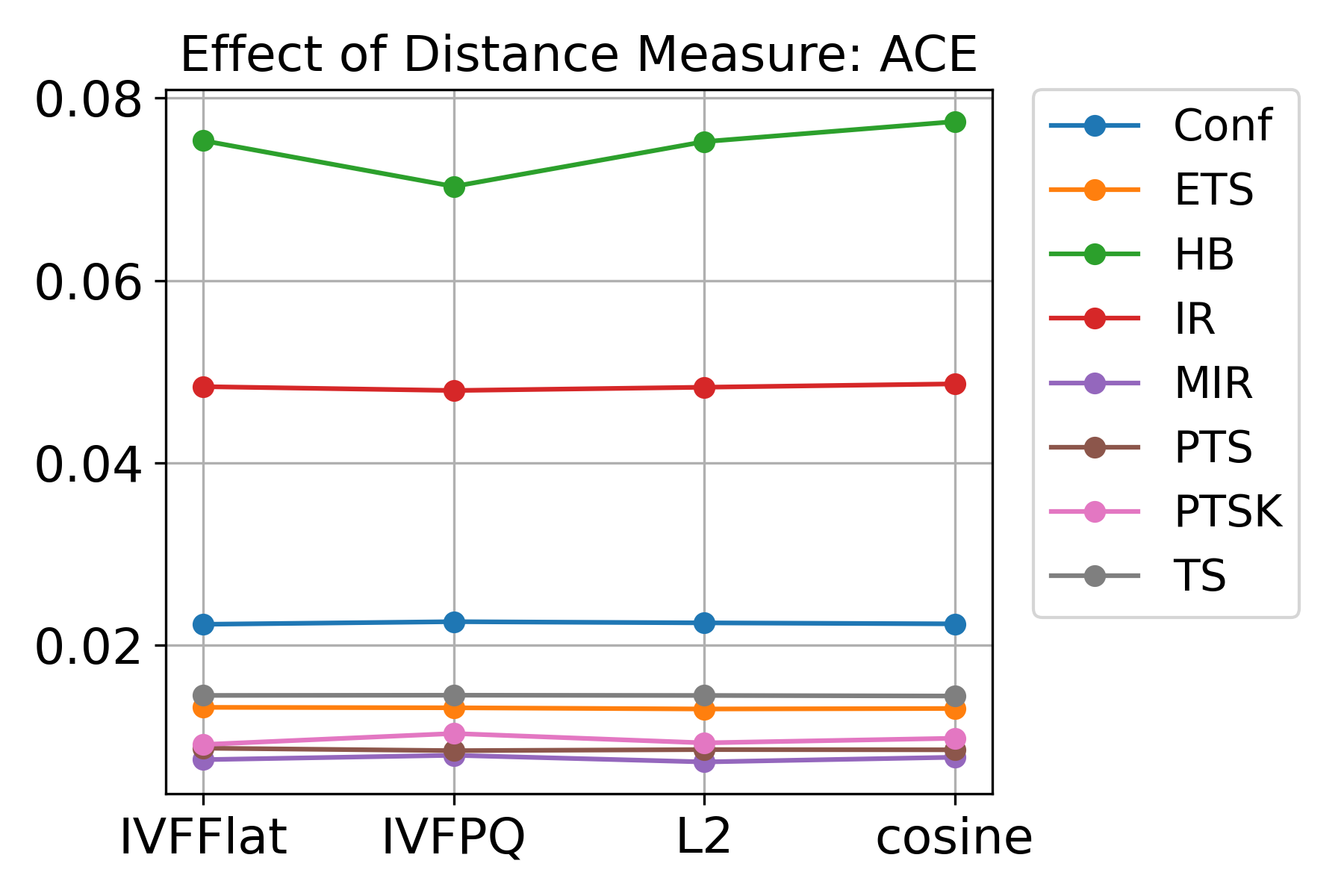}
    \end{subfigure}
    \begin{subfigure}{0.49\textwidth}
        \centering
        \includegraphics[width=\textwidth]{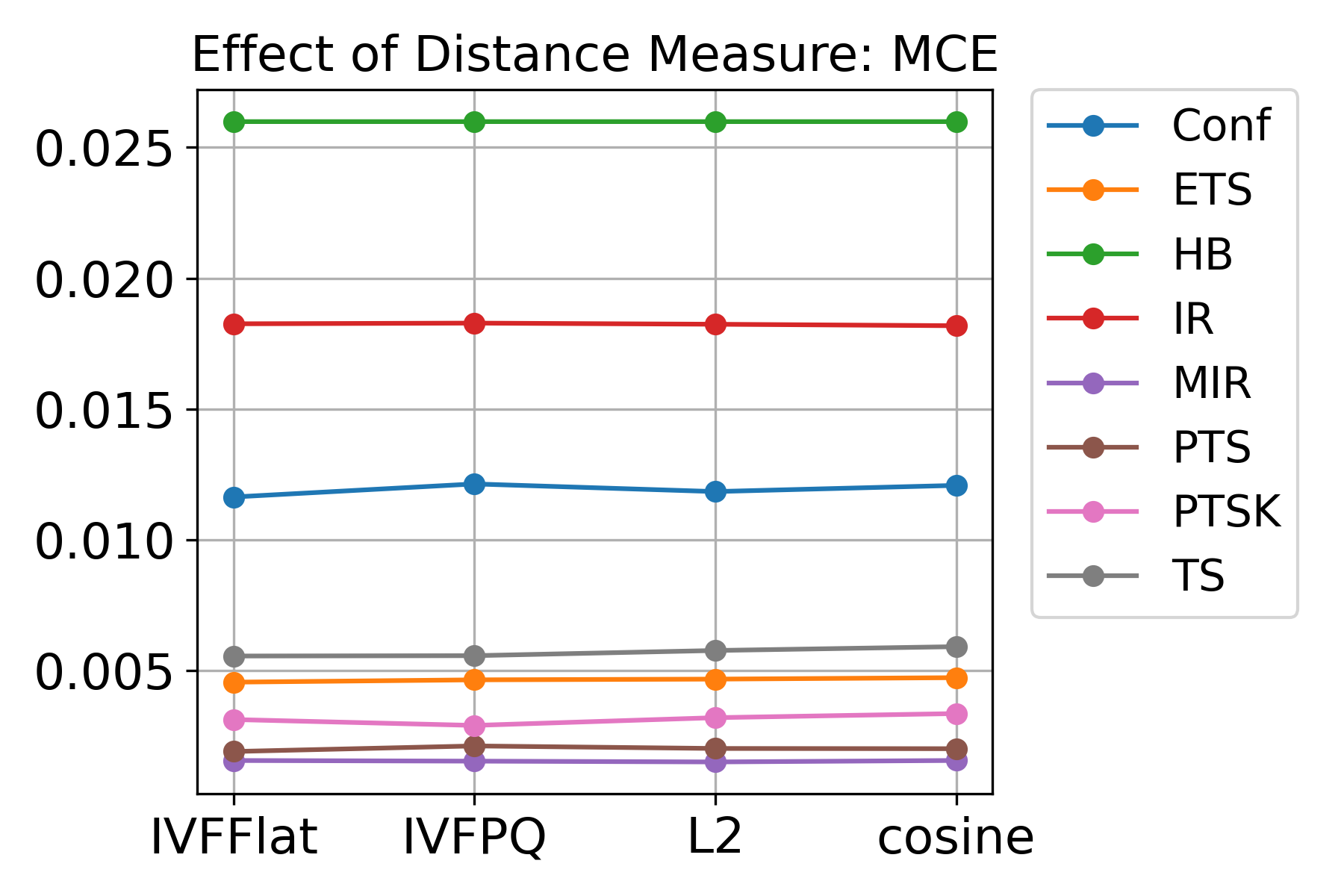}
    \end{subfigure}
    \begin{subfigure}{0.49\textwidth}
        \centering
        \includegraphics[width=\textwidth]{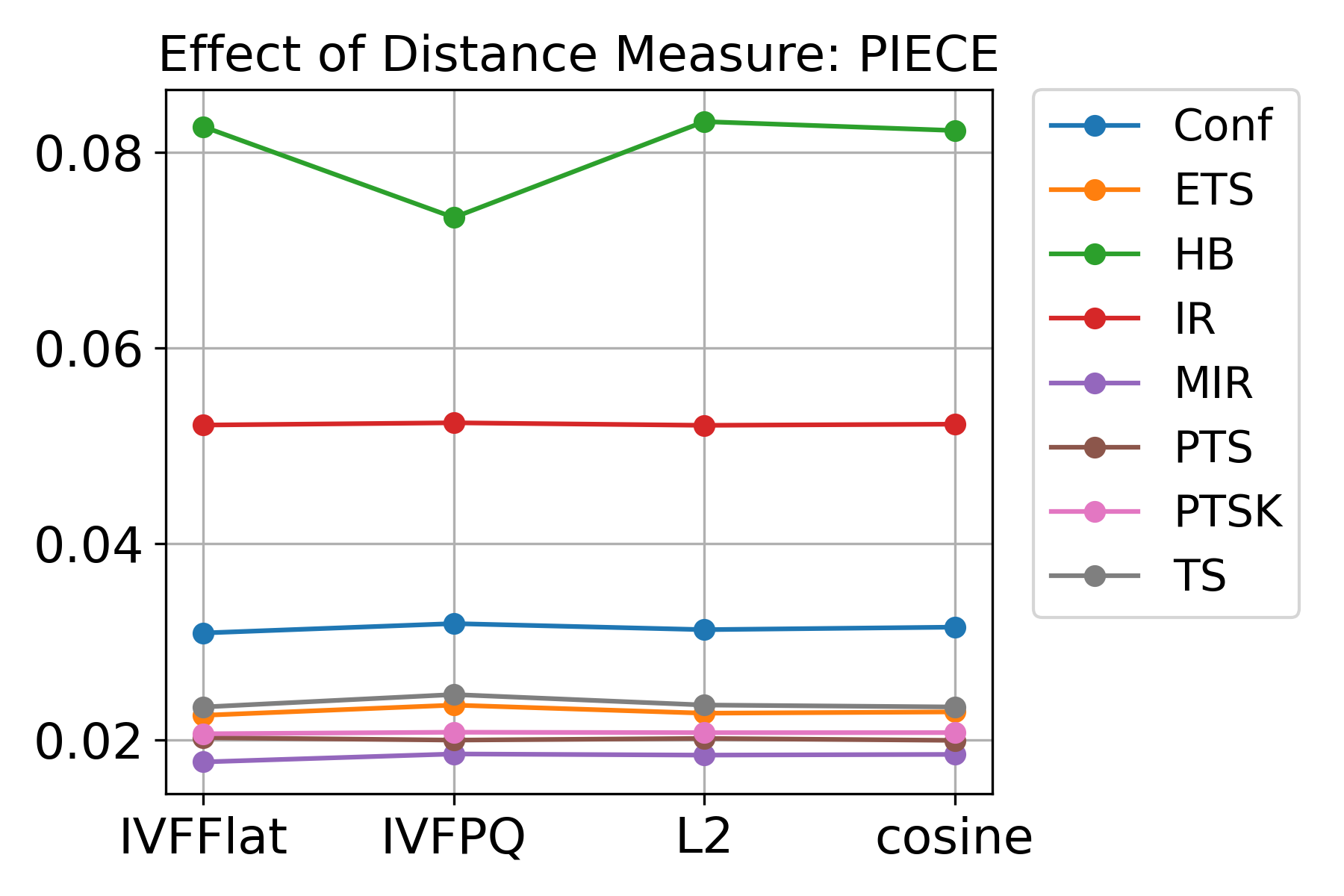}
    \end{subfigure}
    \caption{Hyperparameter sensitivity of several proximity measure under four evaluation metrics. }
    \label{app:fig:hyper-proximity-all}
\end{figure}

\section{Pseudo-codes}
We present the procedural steps of our approach in the form of a pseudocode. Algorithm~\ref{alg:inference} encompasses the general inference phase:
\begin{algorithm}
\caption{Inference procedure.}\label{alg:inference}
\begin{algorithmic}[1]
\Require Test sample $\mathbf{x} \in \mathbb{R}^{n}$, held-out embeddings $E\in\mathbb{R}^{N\times d}$, classifier $f$ (calibrated or uncalibrated), number of nearest neighbors $K$, nearest neighbor search algorithm $S$ using \texttt{Faiss} library, \method calibrator $C$
\Procedure{Inference}{$\mathbf{x}$}
\State $\mathbf{e}_x, \hat{p}, \hat{y} \gets f(\mathbf{x})$  \Comment{get feature embedding, prediction and confidence}
\State $\mathbf{d} \gets S.search(\mathbf{e}_x, K)$ 
\State $d_x \gets exp\{-mean(\mathbf{d}\})$ \Comment{proximity as the average distance to $K$ nearest neighbors}
\State \Return $C(\hat{p}, d_x)$ \Comment{return calibrated confidence}
\EndProcedure
\end{algorithmic}
\end{algorithm}

Algorithm~\ref{alg:density-ratio} encapsulates the Density-Ratio Calibration algorithm.
\begin{algorithm}
\caption{Density-Ratio Calibration.}\label{alg:density-ratio}
\begin{algorithmic}[1]
\Require Pre-trained model $M$, validation set with pre-computed proximity $\mathcal{D}_{val}=\{\mathcal{X}, \mathcal{Y}, \mathcal{D}\}$, test set with pre-computed proximity $\mathcal{D}_{test}=\{\mathcal{X}_{test}, \mathcal{Y}_{test}, \mathcal{D}_{test}\}$,
\Procedure{DensityRatioCalib}{$\mathbf{x}$}
\State $\mathcal{D}_{val}^{+} = \emptyset, \mathcal{D}_{val}^{-} = \emptyset$
\For{$i = 1, \dots, |\mathcal{X}|$}
    \State $\hat{y}_i, p_i \gets M(x_i), x_i \in \mathcal{X}$  \Comment{get predicted class label and confidence}
    \If{$\hat{y}_i = y$} \Comment{split $\mathcal{D}_{val}$ based on prediction correctness}
        \State ${D}_{val}^{+} \gets \mathcal{D}_{val}^{+} \cup \{<p_i, d_i>\}$
    \Else
        \State $\mathcal{D}_{val}^{-} \gets \mathcal{D}_{val}^{-} \cup \{<p_i, d_i>\}$
    \EndIf
\EndFor
\State $\mathsf{KDE}^{+}  \gets KDE(\mathcal{D}_{val}^{+})$ \Comment{2-dimension KDE given confidence and proximity}
\State $\mathsf{KDE}^{-}  \gets KDE(\mathcal{D}_{val}^{-})$ 
\State $\gamma \gets \frac{|\mathcal{D}_{val}^{-}|}{|\mathcal{D}_{val}^{+}|}$
\For{$j = 1, \dots, |\mathcal{X}_{test}|$}
\State $\hat{y}_j, p_j \gets M(x_j), x_j \in \mathcal{X}_{test}$
\State $s_j = \frac{\mathsf{KDE}^{+}(D_j, p_j)}{\mathsf{KDE}^{+}(D_j, p_j) +\gamma \times \mathsf{KDE}^{-}(D_j, p_j)}$ \Comment{compute re-calibrated score for test sample}
\EndFor
\State \Return $s_j, j = 1, \dots, |\mathcal{X}_{test}|$ 
\EndProcedure
\end{algorithmic}
\end{algorithm}

\end{document}